\documentclass[10pt,twocolumn,letterpaper]{article}

\usepackage[pagenumbers]{cvpr} 

\definecolor{cvprblue}{RGB}{219, 61, 111}
\usepackage[pagebackref,breaklinks,colorlinks,allcolors=cvprblue]{hyperref}

\def\paperID{43079} 
\def\confName{CVPR}
\def\confYear{2026}

\title{Globally Optimal Pose from Orthographic Silhouettes}

\author{\normalsize{
Agniva Sengupta$^{1,2}$ \quad
Dilara Kuş$^{1,2}$ \quad
Jianning Li$^{2}$ \quad
Stefan Zachow$^{2}$} \\
\vspace{0.5em}
\footnotesize{$^{1}$Freie Universität Berlin \qquad
$^{2}$Zuse Institute Berlin }
}
\date{}

\usepackage{amsmath,amsfonts}
\usepackage{algorithm}
\usepackage{array}
\usepackage{textcomp}
\usepackage{stfloats}
\usepackage{url}
\usepackage{verbatim}
\usepackage{algpseudocode}
\usepackage{graphicx}
\usepackage{cite}
\usepackage[acronym]{glossaries}
\usepackage{amsthm}
\usepackage[abs]{overpic}
\usepackage[T1]{fontenc}
\usepackage{changepage}
\usepackage{ragged2e, lipsum}
\usepackage{tabularx, caption, booktabs}
\usepackage{fancyhdr}
\usepackage[pagebackref,breaklinks]{hyperref}
\usepackage[capitalize]{cleveref}
\usepackage[utf8]{inputenc}
\usepackage{tabularray}
\usepackage{colortbl}
\usepackage{mathdots}
\usepackage{makecell}
\usepackage{amssymb}
\usepackage{mathabx}
\usepackage{enumitem}
\usepackage{multirow}
\usepackage{subcaption}
\usepackage{blindtext}
\usepackage[export]{adjustbox}
\usepackage{pifont}
\usepackage{accents}
\usepackage{hhline}
\usepackage{times}
\usepackage{epsfig}
\usepackage{graphicx}
\usepackage{amsmath}
\usepackage{amssymb}
\usepackage{mathrsfs}  
\usepackage{wrapfig}
\usepackage{xr-hyper}
\usepackage{cancel}
\usepackage{textcomp}
\usepackage[accsupp]{axessibility}  
\usepackage{titling}

\newcommand{\y}[1]{\gls{#1}}

\mathchardef\mhyphen="2D

 \definecolor{shadecolor}{gray}{0.85}

\newtheorem{lemma}{Lemma}
\newtheorem{theorem}{Theorem}

\newtheorem{assume}{Assumption}

\newcommand{\rom}[1]{\uppercase\expandafter{\romannumeral #1\relax}}

\newlength{\dhatheight}

\allowdisplaybreaks    
\definecolor{iccvblue}{rgb}{0.21,0.49,0.74}

\definecolor{colAsym}{RGB}{9, 58, 181}
\definecolor{colSym}{RGB}{173,26,85}

\crefname{algocf}{alg.}{algs.}
\Crefname{algocf}{Algorithm}{Algorithms}

\newacronym{ttp}{TTP}{Tubular Topology Prior}
\newacronym{stp}{STP}{Spherical Topology Prior}
\newacronym{wtp}{WTP}{Weak Topology Prior}
\newacronym{sft}{S\textit{f}T}{Shape-from-Template}
\newacronym{nrsfm}{NRS\textit{f}M}{Non-Rigid Structure-from-Motion}
\newacronym{mdh}{MDH}{Maximum Depth Heuristic}
\newacronym{mdv}{MDV}{Maximum Depth Value}
\newacronym{cnn}{CNN}{Convolutional Neural Network}
\newacronym{dcnn}{DCNN}{Deep Convolutional Neural Network}
\newacronym{crf}{CRF}{Conditional Random Fields}
\newacronym{nng}{NNG}{Nearest-Neighborhood Graph}
\newacronym{tps}{TPS}{Thin-Plate Spline}
\newacronym{sd}{SD}{Standard-Deviation}
\newacronym{apss}{APSS}{Algebraic Point-Set Surfaces}
\newacronym{slam}{SL\textit{a}M}{Simultaneous Localisation and Mapping}
\newacronym{icp}{ICP}{Iterative Closest Point}
\newacronym{wtas}{GWT}{Generic Weak Template}
\newacronym{wtts}{SWT}{Specific Weak Template}
\newacronym{m1}{NRS\textit{f}M++}{NRSfM-suffix-`++'}
\newacronym{m2}{NRS\textit{f}M+}{NRSfM-suffix-`+'}
\newacronym{svd}{SVD}{Singular Value Decomposition}
\newacronym{lls}{LLS}{Linear Least-Squares}
\newacronym{ps}{PS}{Parametric Surface}
\newacronym{nps}{NS}{Non-parametric Surface}
\newacronym{rbf}{RBF}{Radial Basis Function}
\newacronym{gpm}{GPM}{Generalised Problem of Moments}
\newacronym{nrba}{BA}{Bundle Adjustment} 
\newacronym{lm}{LM}{Levenberg-Marquardt}
\newacronym{bnb}{B\textit{n}B}{Branch-and-Bound}
\newacronym{lft}{LFT}{Linear Fractional Transformations}
\newacronym{sp}{SP}{Stereographic Projection}
\newacronym{cf}{CF}{conformal flattening}
\newacronym{sfm}{S\textit{f}M}{Structure-from-Motion}
\newacronym{socp}{SOCP}{Second-Order Cone Programming}
\newacronym{mlh}{MLH}{Maximum-Leg Heuristics}
\newacronym{sdp}{SDP}{Semi-Definite Programming}
\newacronym{arap}{ARAP}{As-Rigid-As-Possible}
\newacronym{sfc}{S\textit{f}C}{Structure-from-Category}
\newacronym{rmse}{RMSE}{Root Mean Square Error}
\newacronym{p2p}{$\mu$P$_2$P}{Mean Point-to-Plane Error}
\newacronym{med}{$\mathrm{E}_d$}{Mean Euclidean Distance}
\newacronym{gt}{G\textit{t}}{Groundtruth}
\newacronym{dcf}{DCF}{Distance Constraint Function}
\newacronym{isonr}{Iso-NRS\textit{f}M}{Isometric NRS\textit{f}M}
\newacronym{mdhnr}{MDH-NRS\textit{f}M}{Maximum Depth Heuristic NRS\textit{f}M}
\newacronym{ao}{AO}{Alternating Optimisation}
\newacronym{grp}{GRP}{Graph Realisation Problem}
\newacronym{agrp}{AF-GRP}{Anchor-Free Graph Realization Problem}
\newacronym{bysr}{BY-$\mathbf{S}^n_+$-R}{Biswas-Ye Semidefinite Relaxation}
\newacronym{edm}{EDM}{Euclidean Distance Matrix}
\newacronym{gdt}{GDT}{Generalised Distance Tensor}
\newacronym{gm}{GM}{Gram Matrix}
\newacronym{dgm}{DGM}{Depth Gram Matrix}
\newacronym{pgm}{PGM}{Position Gram Matrix}
\newacronym{mt}{MT}{Metric Tensegrity}
\newacronym{lrsb}{LRSB}{Low Rank Shape Basis}
\newacronym{sos}{S\textit{o}S}{Sum-of-Squares}
\newacronym{ecp}{EMCP}{Euclidean-distance Matrix Completion Problem}
\newacronym{pcp}{AMCP}{Approximate Metric Completion Problem}
\newacronym{apcp}{SMCP}{Strict Metric Completion Problem}

\newacronym{lrmc}{LRMC}{Low Rank Matrix Completion}
\newacronym{kkt}{KKT}{Karush–Kuhn–Tucker}

\newacronym{au}{$au$}{Arbitrary Units}

\newacronym{sm}{SM}{Supplementary Materials}
\newacronym{qcqp}{QCQP}{Quadratically Constrained Quadratic Programming}

\newacronym{snr}{\textit{s}I-NRS\textit{f}M}{Strictly Isometric NRS\textit{f}M}
\newacronym{qnr}{\textit{q}I-NRS\textit{f}M}{Quasi Isometric NRS\textit{f}M}
\newacronym{enr}{E-NRS\textit{f}M}{Equiareal NRS\textit{f}M}
\newacronym{hnr}{\textit{q}E-NRS\textit{f}M}{Quasi Equiareal NRS\textit{f}M}
\newacronym{sl}{SL}{Sight-Line}
\newacronym{psd}{PSD}{Positive Semi-definite}
\newacronym{dsl}{D\textit{a}SL}{Depth-Along-Sight-Lines}
\newacronym{pp}{P3}{Position in $\mathbb{R}^3$}
\newacronym{po}{PO}{Polynomial Optimisation}
\newacronym{pn}{PN}{Proximal Neighbors}
\newacronym{lmi}{LMI}{Linear Matrix Inequality}
\newacronym{pop}{P\textit{o}P}{Polynomial Optimisation Problem}
\newacronym{pmi}{PMI}{Polynomial Matrix Inequality}
\newacronym{llr}{L\textit{h}LR}{Lasserre's hierarchy of LMI Relaxations}
\newacronym{isg}{ISG}{Isometric Surface Generator}
\newacronym{qsg}{ESG}{Equiareal Surface Generator}
\newacronym{wc}{WC}{White Cartoon T-shirt}
\newacronym{kp}{KP}{Kinect Paper}
\newacronym{sts}{ST}{Stretched T-shirt}
\newacronym{mh}{MH}{Model House}
\newacronym{sb}{SB}{Squeezed Balloon}
\newacronym{slg}{SL}{Stretched Leggings}
\newacronym{b39}{Bramante39M}{Bramante 39 million}

\newacronym{nmr}{NMR}{Nuclear Magnetic Resonance}
\newacronym{snl}{WSNL}{Wireless Sensor Network Localisation}

\newacronym{lut}{L\textit{u}T}{Lookup Table}

\newacronym{mod}{MOD}{Maximum Opposite Depth}
\newacronym{smod}{S\textit{q}MOD}{Squared Maximum Opposite Depth}

\newacronym{s1}{\textit{s}I-$\delta$}{Strictly Isometric - D\textit{a}SL}
\newacronym{s2}{\textit{s}I-P}{Strictly Isometric - \textit{p}3D}
\newacronym{q1}{\textit{m}I-$\delta$}{Maximal Isometric - D\textit{a}SL}
\newacronym{q2}{\textit{m}I-P}{Maximal Isometric - \textit{p}3D}
\newacronym{h1}{\textit{e}I-$\delta$}{Isometric + Equiareal - D\textit{a}SL}
\newacronym{h2}{\textit{e}I-P}{Isometric + Equiareal - \textit{p}3D}

\newacronym{wt}{WT}{Weak Template}
\newacronym{st}{ST}{Strong Template}
\newacronym{gwt}{GWT}{Generic Weak Template}
\newacronym{swt}{SWT}{Specific Weak Template}
\newacronym{qpbo}{QPBO}{Quadratic Pseudo-Boolean Optimisation}
\newacronym{qclp}{QCLP}{Quadratically Constrained Linear Program}

\newacronym{asft}{alg-S\textit{f}T}{algebraic-S\textit{f}T}
\newacronym{anrsfm}{alg-NRS\textit{f}M}{algebraic-NRS\textit{f}M}
\newacronym{crnk}{$c$-rank}{rank up to precision $c$}

\newacronym{od}{OD}{Opposite Depth}
\newacronym{lodh}{LODH}{Least Opposite Depth Heuristics}

\newacronym{anr}{alg-NRS\textit{f}M}{algebraic-NRS\textit{f}M}

\newacronym{mih}{MIH}{Maximum Isometry Heuristics}
\newacronym{pnp}{P\textit{n}P}{Perspective-$n$-Point}

\newacronym{bsl_NonLinLie}{N\textit{l}R}{Non-linear Refinement}
\newacronym{bsl_NonLinICP}{N\textit{l}-P\textit{a}R}{Non-linear Project-and-Refine}
\newacronym{bsl_GMS}{M\textit{s}-GO}{Multi start - Global Optimization}

\newacronym{lc}{L-\textit{c}}{Lipschitz-continuous}
\newacronym{pal}{PARS}{Projected Area Response Surface}
\newacronym{pearl}{PEARS}{Projected Elliptical Aspect Response Surface}

\newacronym{pfs}{P\textit{f}S}{Pose-from-Silhouette}
\newacronym{ar}{AR}{Augmented Reality}

\newacronym{aos}{A\textit{o}S}{Area-of-Silhouettes}

\newacronym{pd}{PD}{Phlegmatic Dragon}
\newacronym{stbu}{SB}{Stanford Bunny}
\newacronym{pelb}{PB}{Pelvic Bone}

\newacronym{gp}{GlOptiPoS}{Globally Optimal P\textit{f}S}
\newacronym{gpp}{GlOptiPoS+}{GlOptiPoS~+~Non-linear Refinement}

\newacronym{gpersp}{GlOptiPoS$_{\Pi}$}{GlOptiPoS Perspective}
\newacronym{gppersp}{GlOptiPoS$_{\Pi}$+}{GlOptiPoS+ Perspective}
\newacronym{gppersp8}{GlOptiPoS$_{\Pi}\pm8$}{GlOptiPoS+ Perspective $\pm8$}

\newacronym{oe}{OE}{Orientation Error}
\newacronym{te}{TE}{Translational Error}

\newacronym{ld}{LD\textit{o}BB}{Largest Diagonal of Bounding-Box}

\newacronym{tipfs}{STI-Pose}{Silhouette Texture Independent Pose}

\newacronym{tipfsa}{STI-Pose-A}{STI-Pose-A}
\newacronym{tipfsb}{STI-Pose-B}{STI-Pose-B}

\newacronym{tipfsorth}{STI-Pose$_{\Pi_{\mathrm{O}}}$}{STI-Pose (orthographic)}

\newacronym{pso}{PSO}{Particle Swarm Optimisation}

\newacronym{bcot}{B\textit{c}OT}{\textit{B}ino\textit{c}ular \textit{O}bject \textit{T}racking}

\newacronym{dac}{DAC}{Deep Active Contours}

\newacronym{p1e}{P$_1$E}{Perspective-1-Ellipsoid}

\newacronym{dof}{D\textit{o}F}{Degrees-of-Freedom}

\newacronym{sfs}{S\textit{f}S}{Shape-from-Silhouette}

\begin{document}
\maketitle
\begin{abstract}
We solve the problem of determining the pose of known shapes in $\mathbb{R}^3$ from their unoccluded silhouettes. The pose is determined up to global optimality using a simple yet under-explored property of the area-of-silhouette: its continuity w.r.t trajectories in the rotation space. The proposed method utilises pre-computed silhouette-signatures, modelled as a response surface of the area-of-silhouettes. Querying this silhouette-signature response surface for pose estimation leads to a strong branching of the rotation search space, making resolution-guided candidate search feasible. Additionally, we utilise the aspect ratio of 2D ellipses fitted to projected silhouettes as an auxiliary global shape signature to accelerate the pose search. This combined strategy forms the first method to efficiently estimate globally optimal pose from just the silhouettes, without being guided by correspondences, for any shape, irrespective of its convexity and genus. We validate our method on synthetic and real examples, demonstrating significantly improved accuracy against comparable approaches.\footnote{\textbf{Code and data:} \href{https://agnivsen.github.io/pose-from-silhouette/}{agnivsen.github.io/pose-from-silhouette/}}
\end{abstract}
    
\section{Introduction}\label{sec_intro}

Estimating the global pose of 3D objects from a single image typically relies on point correspondences between the object template and the image. We address this pose-estimation problem using a much weaker cue -- the object's projected silhouette; we call this the \y{pfs} problem. Our approach discards all assumptions related to the object's shape, such as convexity or genus, and solve this \y{pfs} problem without additional information such as point correspondences or direct image intensities. Our input requirements are the projective silhouettes and a template of the object (in standard formats, e.g., triangulated mesh or semi-dense point cloud). Our solution is globally optimal when the shape-silhouette combination admits a unique solution. However, for symmetric shapes, global solutions for \y{pfs} may not be unique; in such cases, we estimate exactly one of these many redundant solutions -- a reasonable outcome from silhouettes alone. This work presents the first globally optimal solution to the \y{pfs} problem.   

\noindent \textbf{Motivation}.~Solving \y{pfs} is appealing due to breadth of its motivating use-cases. Silhouettes are a key visual cue for diverse engineering applications, e.g.: autonomous driving \citep{wang2020directshape}, robotic manipulation \citep{hebert2012combined}, robotic navigation \citep{albanis2020dronepose}, \y{ar} \citep{chen2025robust}, surgical \y{ar} \citep{collins2020augmented}, space navigation \citep{guo2021pose}, reconstruction from sparse views in medical imaging \citep{sengupta2025shape}, automated flight control \citep{zhang2023mc}, and industrial quality control \citep{perez2023optimal}. A closer inspection of these usages reveal silhouettes to be always used in conjunction with additional visual cues, be it feature correspondences, image intensities, or temporal priors -- an unsurprising finding. \y{pfs} is indeed a fundamentally ill-posed problem; finding its global optima is challenging with no existing solutions. 

\noindent\textbf{Contributions.}~We solve the problem of \textit{pose-estimation using just the unoccluded silhouettes of rigid objects} as input data. Our approach for solving this \y{pfs} problem involves pre-computing global shape signature responses for a given object-template and using this pre-computed shape-signature to infer pose from any arbitrary silhouette of this object. Specifically, we contribute the following:
\begin{itemize}[leftmargin=0.5em]
    \item \textbf{Globally optimal silhouette-based pose estimation.}~Building upon the framework of \citet{hartley2009global}, we introduce shape signatures of orthographic silhouettes that vary continuously with an object's orientation, enabling non-trivial branching strategies that lead to solutions converging to global optima, up to discretisation.

    \item \textbf{Geometric shape signatures and refinement.}~We propose two intuitive yet powerful silhouette descriptors: the \y{aos} and the aspect ratio of ellipses fitted to the silhouettes; we show that they capture sufficient geometric information for accurate pose estimation.  Orientation is recovered numerically, followed by a post-hoc non-linear refinement on the $\mathbb{SE}(3)$ manifold.

    \item \textbf{Generalisation to perspective imaging.}~Assuming depth priors as input, the same formulation achieves \emph{near-optimal accuracy} for perspective silhouettes, demonstrating strong practical performance while preserving an explainable, initialisation-free design.
\end{itemize}


\noindent We validate our approach on many synthetic and real datasets, strongly outperforming compared approaches.
\section{Background}\label{sec_background}

Pose estimation from 2D silhouettes has been widely studied when considered in conjunction with 3D-to-2D point correspondences \citep{kong2017using, sengupta2025shape} or with alternative correspondence cues such as contour generators \citep{menudetmodel}, image textures \citep{toppe2011silhouette, chen2025robust, cui2024silhouette}, or both \citep{prasad2005fast}. However, correspondence-free pose from just the 2D silhouettes/contours has not been solved up to global optimality for general shapes. There exist specialised approaches for specific shapes, such as the case of pose-estimation from ellipse-ellipsoid matches \citep{gaudilliere2023perspective} (which can be considered as pose estimation from elliptical silhouettes), for surfaces of revolution \citep{zhang2009using}, and for cylindrical objects \citep{gummeson2024relative}. None of these approaches, however, can be generalised to arbitrary shapes. There exist some older methods that utilise the notion of `silhouette-lookup' \citep{howe2004silhouette,howe2007silhouette} to estimate poses of deformable structures from learned silhouette appearances along with some other methods \citep{vijayakumar1998invariant, lazebnik2002pencils} that recognize objects based on the pencil of viewing tangent planes along the silhouette from smooth objects, but no efforts have been made to solve the pose estimation problem to global optimality, even in the rigid case. Recently, deep-learning based approaches \citep{wang2023deep} have been proposed for the \y{pfs} problem; unfortunately, they happen to be local methods necessitating an initial pose for their estimation process and requires colour information along object boundary. A \y{pso} based approach has recently been proposed for perspective silhouettes \citep{cui2024silhouette}, though it remains dependent on approximate depth bounds and remains stochastic with no optimality guarantees. \textbf{No globally optimal method exists for pose estimation from silhouettes alone.}
\section{Method}\label{sec_method}

Our methodological description begins by explaining the problem setup (\cref{sec_prblm_setup}) and problem statement (\cref{sec_prblm_stmt}) followed by our strategy of branching the rotation space with global shape signatures (\cref{sec_rot_brnch}) which culminates in a globally optimal and efficient pose estimation approach (\cref{sec_pose_est}). 

\subsection{Problem Setup}\label{sec_prblm_setup}
We assume the availability of any standard shape prior (e.g.: triangulated/tetrahedral mesh, explicit/implicit surfaces, etc.) of the object that is to be localized by its silhouettes. A dense point cloud of 3D surface points is obtained from this shape prior via standard methods (e.g.: sampling \citep{corsini2012efficient} for meshes). We denote this point cloud as $\mathbf{Q} = [\mathbf{P}_1, \hdots, \mathbf{P}_M] \in \mathbb{R}^{3 \times M}$ maintaining $M$ at a sufficiently large value to ensure accurate extraction of silhouettes from projections of $\mathbf{Q}$. We denote by $\Pi_{\mathrm{O}}(\mathbf{Q})$, the point cloud orthographically projected into the $XY$-plane, obtained by simply eliminating the third row of $\mathbf{Q}$. Silhouettes are represented discretely by the ordered sequence $\mathbf{G} = \langle \mathbf{s}_k \in \mathbb{R}^2 | k \in \mathbb{Z} \rangle$ of 2D points within the $XY$-plane. There exists a function $S(\cdot)$ that acts on a 2D point cloud computing the outer boundary of this point cloud with standard methods \citep{edelsbrunner1983shape}, i.e., $\mathbf{G} = S\big( \Pi_{\mathrm{O}}(\mathbf{Q}) \big)$ if $\mathbf{G}$ is the orthographic silhouette of $\mathbf{Q}$. The input silhouette is given by $\mathbf{G}^{\ast}$

\subsection{Problem Statement}\label{sec_prblm_stmt}

The posed problem is defined in terms of the Hausdorff metric. Given an input silhouette $\mathbf{G}^{\ast}$ and an orthographic silhouette $\mathbf{G}$ of the template $\mathbf{Q}$, the Hausdorff distance between the two silhouettes is:
\begin{equation}\label{eq_org_haussdorff_prob}
    \begin{gathered}
        H(\mathbf{G},\mathbf{G}^{\ast}) = \mathrm{max}\Big( \max_{\mathbf{s}_k \in \mathbf{G}} \big(\min_{\mathbf{s}_k^{\ast} \in \mathbf{G}^{\ast}} \|\mathbf{s}_k - \mathbf{s}_k^{\ast}\|\big) ,\\ \max_{\mathbf{s}_k^{\ast} \in \mathbf{G}^{\ast}} \big(\min_{\mathbf{s}_k \in \mathbf{G}} \|\mathbf{s}_k^{\ast} - \mathbf{s}_k\|\big) \Big).
    \end{gathered}
\end{equation}
\noindent However, the orthographic silhouette of any model point cloud $\mathbf{Q}$ that has been rotated by $\mathbf{R} \in \mathbb{SO}(3)$ and translated by $(\mathbf{t}^{\top}, 0)^{\top}$ for some $\mathbf{t} \in \mathbb{R}^2$ is $\tilde{S}(\mathbf{Q},\mathbf{R},\mathbf{t}) = S\big( \Pi_{\mathrm{O}}(\mathbf{R}\mathbf{Q} + (\mathbf{t}^{\top}, 0)^{\top})\big)$. Therefore, the \y{pfs} problem we want to solve relates to a combination of \cref{eq_org_haussdorff_prob} and $\tilde{S}$ as follows:
\begin{equation}\label{key_problem}
    \begin{gathered}
        \boxed{\min_{\mathbf{R} \in \mathbb{SO}(3), \mathbf{t} \in \mathbf{R}^2} H(\tilde{\mathbf{G}},\mathbf{G}^{\ast}), \quad \text{s.t.:} \quad \tilde{\mathbf{G}} = \tilde{S}(\mathbf{Q},\mathbf{R},\mathbf{t})}.
    \end{gathered}
\end{equation}
\noindent Solving \cref{key_problem} is challenging due to non-convexity of the $\mathbb{SO}(3)$ manifold and the Hausdorff distance $H(\tilde{\mathbf{G}},\mathbf{G}^{\ast})$, exacerbated by the possible non-uniqueness of globally-optimal solutions. Importantly, we do not make any assumptions about shape symmetry, convexity, or genus for our proposed method. However, we do make a mild assumption about the solution space of \cref{key_problem}:
\begin{assume}\label{assum_sol_exists}
    There exists at least one pair of roto-translations $(\mathbf{R}_{\mathrm{opt}}, \mathbf{t}_{\mathrm{opt}})$ such that the Hausdorff distance $H(\tilde{\mathbf{G}},\mathbf{G}^{\ast}) \sim 0$ for $\tilde{\mathbf{G}} = \tilde{S}(\mathbf{Q},\mathbf{R}_{\mathrm{opt}},\mathbf{t}_{\mathrm{opt}})$.
\end{assume}
\noindent \Cref{assum_sol_exists} is a practically reasonable emphasis on presence of at least one solution in the solution space. 

\noindent\textbf{Input shape requirement.}~A necessary condition: projected silhouette of $\mathbf{Q}$ admits well-defined area for all $\mathbf{R} \in \mathbb{SO}(3)$, ruling out degenerate projections collapsing to lower-dimensional sets, e.g.: points, lines, or curves.

\subsection{Rotation Space Branching}\label{sec_rot_brnch}
A natural approach for solving \cref{key_problem} is to identify a small subset of $\mathbb{SO}(3)$ that encompasses its feasible set. Our proposed method identifies such subsets by leveraging global shape signatures of silhouettes, irrespective of their non-uniqueness, to partition the $\mathbb{SO}(3)$ search space.

\subsubsection{Area of silhouettes as shape signature}
We use the area of a region enclosed by the silhouette as a global geometric feature, to subdivide the search space. We introduce the function $A$ that acts on a silhouette to compute the area enclosed by it. This area can be regarded as a signature of the shape that is given by the silhouette. We begin with some simple observations about this shape signature:
\begin{theorem}\label{cont_theorem}
$A(\tilde{\mathbf{G}})$ is \y{lc} w.r.t. any sequence of rotations of $\mathbf{Q}$ in $\mathbb{R}^3$ as long as the sequence of rotations is \y{lc} and $\mathbf{Q}$ is representable with a finite collection of triangles (e.g., a triangulated mesh).
\end{theorem}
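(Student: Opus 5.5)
We interpret the silhouette area $A(\tilde{\mathbf{G}})$ as the area of the orthographic shadow of the triangulated surface, i.e.\ the Lebesgue measure of $\Pi_{\mathrm{O}}(\mathbf{R}\mathcal{T})$, where $\mathcal{T}=\bigcup_{i=1}^{N} T_i$ is the union of the $N$ triangles representing $\mathbf{Q}$. This region is exactly the one enclosed by the outer boundary $S$, modulo holes, which we treat consistently as the paper's $A$ does. Translation by $(\mathbf{t}^{\top},0)^{\top}$ does not change area, so it suffices to study $f(\mathbf{R}) = |\Pi_{\mathrm{O}}(\mathbf{R}\mathcal{T})|$. Given a Lipschitz rotation trajectory $\gamma:[a,b]\to\mathbb{SO}(3)$ with $\|\gamma(u)-\gamma(v)\|_F\le L_\gamma|u-v|$, we will prove that $f$ is Lipschitz on $\mathbb{SO}(3)$ with respect to the operator/Frobenius norm. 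Then $f\circ\gamma$ is Lipschitz with constant $L_f L_\gamma$, which is the claim.

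\textbf{Step 1: stability of each projected triangle.} For rotations $\mathbf{R}_1,\mathbf{R}_2$ with $\|\mathbf{R}_1-\mathbf{R}_2\|\le\epsilon$, every vertex $\mathbf{P}$ of $\mathcal{T}$ satisfies $\|\Pi_{\mathrm{O}}(\mathbf{R}_1\mathbf{P})-\Pi_{\mathrm{O}}(\mathbf{R}_2\mathbf{P})\|\le \epsilon r$, where $r=\max\|\mathbf{P}\|$ after centring $\mathcal{T}$. Because projection is linear, the same bound holds for every point of every triangle. Consequently the shadow $K_1=\Pi_{\mathrm{O}}(\mathbf{R}_1\mathcal{T})$ lies inside the $\epsilon r$-neighbourhood of $K_2$, and conversely, so the Hausdorff distance satisfies $d_H(K_1,K_2)\le \epsilon r$.

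\textbf{Step 2: area bound from Hausdorff closeness.} This is the main step. It gives
\[
|K_1\setminus K_2| \le |(K_2)_{\epsilon r}\setminus K_2|,
\]
so we need a bound of the form $|(K)_\delta\setminus K|\le C\delta$ that holds uniformly in $\mathbf{R}$ for small $\delta$. The plan is to bound the outer parallel set of the union of $N$ planar triangles by the union of the $\delta$-neighbourhoods of their edges. There are at most $3N$ projected edges, each of length at most $\ell$ (the longest 3D edge, since projection does not increase length). Each edge neighbourhood has area at most $2\delta\ell+\pi\delta^2$. Hence
\[
|(K)_\delta\setminus K|\le 3N(2\delta\ell+\pi\delta^2).
\]
This bound is independent of the rotation, and the uniformity is exactly where finiteness of the triangle collection is used. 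Degenerate projections, where a triangle collapses to a segment, are harmless: the edge-neighbourhood bound still applies to them.

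\textbf{Step 3: conclusion.} Combining Steps 1 and 2, for $\epsilon\le 1$,
\[
|f(\mathbf{R}_1)-f(\mathbf{R}_2)|\le |K_1\setminus K_2|+|K_2\setminus K_1|\le 6N\big(2\ell r+\pi r^2\big)\epsilon .
\]
For $\epsilon>1$, the trivial bound $f\le \pi r^2$ keeps the estimate Lipschitz globally on the compact group. The remaining subtlety is whether the paper's $A$ counts holes of non-zero genus shapes. If $A$ instead denotes the area enclosed by the outer boundary only, we would apply the same argument to the filled shadow, i.e.\ the complement of the unbounded component. The expected obstacle here is controlling how the filled region changes when a thin gap closes. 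We would handle it by observing that filling is $1$-Lipschitz in the Hausdorff sense only up to pinching, and that the boundary of the filled set is still contained in the union of projected edges, so the same edge-neighbourhood estimate applies.
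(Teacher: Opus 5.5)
Your Steps 1--3 are correct. They prove more than the theorem asks: $f(\mathbf{R})=|\Pi_{\mathrm{O}}(\mathbf{R}\mathcal{T})|$ is Lipschitz on all of $\mathbb{SO}(3)$, with a constant depending only on $N$, $\ell$ and $r$, and composing with a Lipschitz trajectory finishes the argument.

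The inclusion $(K)_\delta\setminus K\subseteq\bigcup_e (e)_\delta$, which you state only as a plan, needs one line of justification. For $x\notin K$, a nearest point $y\in K$ exists by compactness and lies in $\partial K$. Moreover $\partial\big(\bigcup_i K_i\big)\subseteq\bigcup_i\partial K_i$, and this union is contained in the projected edges; degenerate triangles are covered because their interior is empty.

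You could also bypass the boundary of the union altogether. The symmetric difference of two unions lies in the union of the triangle-wise symmetric differences. Each projected triangle is convex, so Steiner's formula bounds each of those terms by $(\mathrm{per}\,C_1+\mathrm{per}\,C_2)\delta+2\pi\delta^2$.

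The genuine problem is your closing paragraph on the outer-boundary reading. Let $\hat K$ denote the filled shadow, i.e. the complement of the unbounded component of $\mathbb{R}^2\setminus K$. Step 2 would then need the set inclusion $\hat K_1\subseteq(\hat K_2)_{\epsilon r}$. Knowing that $\partial\hat K$ lies on projected edges does not give this, because filling is not continuous in the Hausdorff metric: when a thin channel closes, an entire hole is added at once.

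This is not a technicality; the statement is false under that reading. Take a capped polyhedral tube around a helical arc of slightly less than one turn with pitch $h$. This is a finite triangle collection bounding a genus-0 solid.
\begin{itemize}
\item Seen along the helix axis, its shadow is a C-shaped strip with a narrow gap.
\item Rotate by a small angle $\varphi$ about the horizontal axis through the gap. This shifts the projected upper end against the lower end by about $h\sin\varphi$ along the gap direction.
\item So, for one sign of $\varphi$, the gap closes at some $\varphi_0$.
\item At $\varphi_0$, $|K|$ varies continuously, exactly as your Steps 1--3 guarantee, but the filled area jumps by the area of the enclosed hole.
\end{itemize}

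So drop the claim that the filled case follows from the same edge-neighbourhood estimate. Instead, state explicitly that you prove the theorem for $A=\big|\bigcup_i\Pi_{\mathrm{O}}(\mathbf{R}T_i)\big|$, with holes not counted. This is the only reading under which the statement holds in the claimed generality, and it is what the finite-triangle hypothesis supports. The outer-boundary-only area would need an extra hypothesis excluding such channel closures.
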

\begin{proof}  See section~1 of supplementary.
\end{proof}
\begin{figure}
    \begin{overpic}[width=0.47\textwidth]{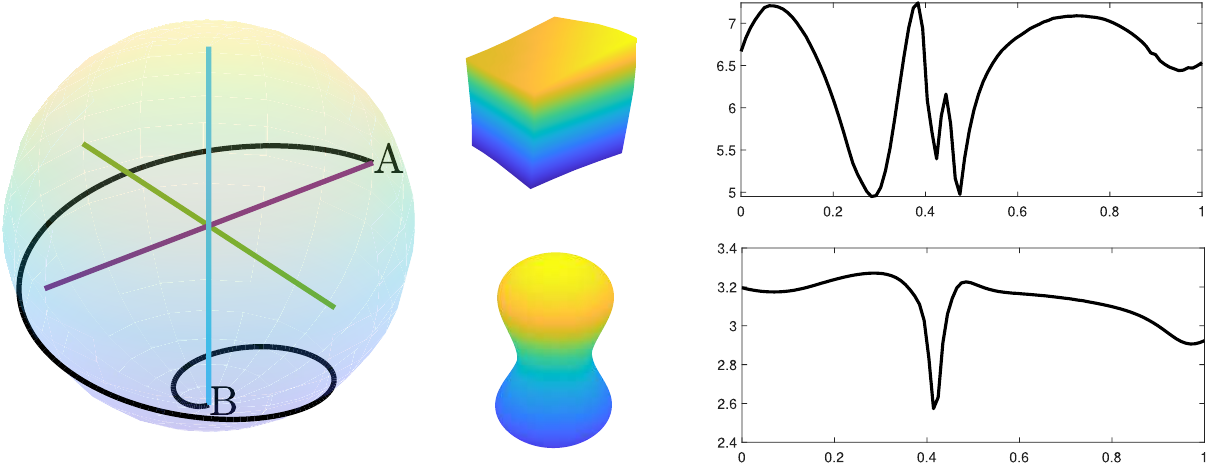}
    \put(37,0){\small (a)}
    \put(87,0){\small (c)}
    \put(87,50){\small (b)}
    \put(210,60){\small (d)}
    \put(210,14){\small (e)}
    \put(140,-6){\footnotesize A}
    \put(230,-6){\footnotesize B}
    \put(128,11){\footnotesize \rotatebox{90}{$A(\tilde{\mathbf{G}})$}}
    \put(128,56){\footnotesize \rotatebox{90}{$A(\tilde{\mathbf{G}})$}}
  \end{overpic}
  \vspace{9pt}
  \caption{When a \y{lc} trajectory from A to B on an unit sphere (a) is mapped to $\mathbb{SO}(3)$ and applied to two arbitrary shapes (b,c), the resulting evolution of orthographic \y{aos}, as shown in (d,e) with its abscissa mapped as $[A,B] \mapsto [0,1]$, is \y{lc} }
\label{fig_illust_1}
\end{figure}

\noindent As a consequence of \cref{cont_theorem}, $A$ is differentiable `almost everywhere' in the compact subset of $\mathbb{R}^9$ embedding $\mathbb{SO}(3)$ due to Rademacher's theorem, and more importantly, have bounded gradients. Although silhouette boundary computation via numeric methods \citep{edelsbrunner1983shape} from discrete representations (e.g.: dense point cloud) for complicated shapes may introduce artefacts in the computed \y{aos}, this does not appear to be a problem in practical cases; we offer two examples of the \y{lc}-ness of computed \y{aos} in arbitrary shapes, shown in \cref{fig_illust_1}. Thus, if the translation $\mathbf{t}$ in \cref{key_problem} is accounted for and if there exists a map $\vartheta:\mathbb{SO}(3)\mapsto\mathbb{R}$ mapping every point of the $\mathbb{SO}(3)$ manifold to the corresponding area of $\tilde{\mathbf{G}}$, then the intersection of the input silhouette's area $A(\mathbf{G}^{\ast})$ with the continuous surface of all possible values of $A(\tilde{\mathbf{G}}),$ must contain the global optima, since $H(\tilde{\mathbf{G}},\mathbf{G}^{\ast}) \sim 0$ implies $|A(\mathbf{G}^{\ast}) - A(\tilde{\mathbf{G}})| \sim 0$. Nonetheless, a conventional \y{bnb} search over $\mathbb{SO}(3)$ (e.g.: \citep{hartley2009global}) is a very expensive proposition. \Cref{sec_ssr} demonstrates a strategy for significant reduction in search space.

\subsubsection{Search space reduction}\label{sec_ssr}
Our goal now is to find the subspace of $\mathbb{SO}(3) \times \mathbb{R}^2$ where $|A(\mathbf{G}^{\ast}) - A(\tilde{\mathbf{G}})| \sim 0$ is satisfied. However, the translational component $\mathbf{t} \in \mathbb{R}^2$ is uninteresting, since: \rom{1}) it can be determined by simply comparing the centroids of modelled and input silhouettes, and \rom{2}) more importantly, for any silhouette $\tilde{\mathbf{G}}$ modelled as $\tilde{\mathbf{G}} = \tilde{S}(\mathbf{Q},\mathbf{R},\mathbf{t})$, the area function $A(\tilde{\mathbf{G}})$ is invariant w.r.t $\mathbf{t} \in \mathbb{R}^2$ and rotation along $Z$-axis. This motivates solving $\mathbf{R} \in \mathbb{SO}(3)$ independently of $\mathbf{t} \in \mathbb{R}^2$. Use of area-based shape signature causes $\mathbb{SO}(3)$ to further split into two categories: a) rotation along $X$ and $Y$ axes ($\mathfrak{R}_{XY}$) that causes $A(\tilde{\mathbf{G}})$ to vary smoothly and b) rotation along $Z$ axis ($\mathfrak{R}_{Z}$) that leaves $A(\tilde{\mathbf{G}})$ invariant (projection is on the $XY$-plane). Thus, the area-based search can be restricted to $\mathfrak{R}_{XY}$, while $\mathfrak{R}_{Z}$ requires a different signature (given later in \cref{sec_pose_est}). Trivially deriving $\mathbf{R}$ from Euler angles is problematic due to non-commutativity of rotations in $\mathfrak{R}_{XY}$ and $\mathfrak{R}_{Z}$, thus necessitating a re-parametrisation.

\noindent \textbf{Postel projection}.~The `azimuthal-equidistant' projection, more commonly known as Postel projection, is a map from a rotation of magnitude $\alpha$ along some unit vector $\hat{\mathbf{v}}$ to a point $\alpha\hat{\mathbf{v}} \in [-\pi, \pi]^3 \subset \mathbb{R}^3$ \citep{hartley2009global, Wikipedia_2024}. The Postel map is typically represented as a map via the quaternion space, given as $\big( \cos{(\frac{\alpha}{2})}, ~\sin{(\frac{\alpha}{2})}\hat{\mathbf{v}}^{\top} \big)^{\top} \mapsto \alpha\hat{\mathbf{v}}$. But the entire $[-\pi, \pi]^3$ space is redundant as well; there exists a sphere of radius $\pi$ centred at origin, we term this sphere as the Postel ball $\mathcal{S}_{\pi}$, and every point in $[-\pi, \pi]^3$ outside $\mathcal{S}_{\pi}$ represents a rotation duplicated with some other point inside $\mathcal{S}_{\pi}$. We represent by $F:\mathbb{R}^3 \mapsto \mathbb{SO}(3)$, a function that takes any point inside $\mathcal{S}_{\pi}$ (except its centre) to its equivalent rotation matrix in $\mathbb{SO}(3)$ by a function $F\big((\alpha, \mathbf{v}^{\top})^{\top}\big)$, detailed in section 6 of the supplementary.



\noindent \textbf{Rotation sufficiency over a disc.}~The search for some specific \y{aos} can therefore be confined to $\mathcal{S}_{\pi}$ without loss of generality. For some $(\alpha, \mathbf{v}) \in \mathcal{S}_{\pi}$, the \y{aos} is obtained by computing $\mathbf{R} \in \mathbb{SO}(3)$ using $F(\cdot)$, and applying this rotation to $\mathbf{Q}$ before computing $A\big( \tilde{S}(\mathbf{Q}, \mathbf{R}, \mathbf{0}_3) \big)$. However, there are still some redundancies in $\mathcal{S}_{\pi}$, since it represents both $\mathfrak{R}_{XY}$ and $\mathfrak{R}_{Z}$, i.e.,:
\begin{lemma}\label{lemm_search_sufficiency}
    Given some $F:(\alpha, \mathbf{v})\mapsto\mathbf{R}$, we have $A\big( \tilde{S}(\mathbf{Q}, \mathbf{R}, \mathbf{0}_3) \big) = A\big( \tilde{S}(\mathbf{Q}, \mathbf{R}_x, \mathbf{0}_3) \big)$ for all $\mathbf{R}_x = F\big((\alpha, \mathbf{v}_x^{\top})^{\top}\big)$, if $\langle \mathbf{v}, (0,0,1)^{\top} \rangle = \langle \mathbf{v}_x, (0,0,1)^{\top} \rangle$.
\end{lemma}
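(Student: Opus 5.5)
The plan is to reduce the claim to two invariances of the area functional: invariance under a left rotation about the $Z$-axis, and the way $F$ transforms under conjugation by such rotations. First, write $\mathbf{v}=(\sin\beta\cos\phi,\ \sin\beta\sin\phi,\ \cos\beta)^{\top}$. The hypothesis $\langle \mathbf{v},(0,0,1)^{\top}\rangle=\langle \mathbf{v}_x,(0,0,1)^{\top}\rangle$ says that $\mathbf{v}_x$ has the same polar angle $\beta$. Hence $\mathbf{v}_x=\mathbf{R}_Z(\psi)\mathbf{v}$ for some $\psi$, where $\mathbf{R}_Z(\psi)$ is the rotation about $Z$ by $\psi$. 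Since $F$ is the Rodrigues/Postel map $F(\alpha,\mathbf{v})=\exp(\alpha[\mathbf{v}]_\times)$, the identity $[\mathbf{R}\mathbf{v}]_\times=\mathbf{R}[\mathbf{v}]_\times\mathbf{R}^{\top}$ gives the conjugation formula
\begin{equation*}
\mathbf{R}_x=F\big((\alpha,\mathbf{v}_x^{\top})^{\top}\big)=\mathbf{R}_Z(\psi)\,\mathbf{R}\,\mathbf{R}_Z(\psi)^{\top}.
\end{equation*}
This is the first step. It only needs the explicit form of $F$ from section 6 of the supplementary.

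Second, I will use the observation already made in \cref{sec_ssr}: $\Pi_{\mathrm{O}}$ commutes with $\mathbf{R}_Z(\psi)$, which acts on the $XY$-plane as a planar rotation. So $\tilde{S}(\mathbf{Q},\mathbf{R}_Z(\psi)\mathbf{M},\mathbf{0})$ is a rigidly rotated copy of $\tilde{S}(\mathbf{Q},\mathbf{M},\mathbf{0})$, and $A$, being an area, is unchanged. Applied to the conjugation formula, the leading $\mathbf{R}_Z(\psi)$ drops out:
\begin{equation*}
A\big(\tilde{S}(\mathbf{Q},\mathbf{R}_x,\mathbf{0})\big)=A\big(\tilde{S}(\mathbf{Q},\mathbf{R}\,\mathbf{R}_Z(\psi)^{\top},\mathbf{0})\big).
\end{equation*}
Equivalently, the area is a function of the template-frame viewing direction $\mathbf{d}=\mathbf{M}^{\top}(0,0,1)^{\top}$ alone. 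Indeed, two rotations with the same third row differ by a left $Z$-rotation.

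The third step, which I expect to be the main obstacle, is to remove the trailing factor $\mathbf{R}_Z(\psi)^{\top}$. In terms of viewing directions, this means comparing $\mathbf{R}^{\top}(0,0,1)^{\top}$ with $\mathbf{R}_Z(\psi)\mathbf{R}^{\top}(0,0,1)^{\top}$. These two directions share the same $Z$-component, namely $\mathbf{R}_{33}$, which is determined by $\alpha$ and $\beta$ alone through Rodrigues: $\mathbf{R}_{33}=\cos\alpha+(1-\cos\alpha)\cos^2\beta$. However, they differ in azimuth by $\psi$.

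The argument therefore has to go through the convention that the template's own $Z$-orientation is absorbed into the $\mathfrak{R}_Z$ component that is deliberately left to the second signature (\cref{sec_pose_est}). That is, one identifies $\mathbf{Q}$ with $\mathbf{R}_Z(\psi)^{\top}\mathbf{Q}$ within the $\mathfrak{R}_{XY}$-search. I would first try to make this precise by checking that the paper's $\mathfrak{R}_{XY}/\mathfrak{R}_Z$ split treats the area as a function on $\mathbb{SO}(3)$ modulo $Z$-rotations on both sides. Under that reading, both viewing directions lie on the same latitude circle and represent the same element of the quotient, and the equality follows. If that identification cannot be justified for an arbitrary fixed $\mathbf{Q}$, this step is exactly where the lemma requires the extra hypothesis. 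I would flag it explicitly rather than hide it in the routine conjugation calculation.
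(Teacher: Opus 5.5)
Your first two steps are correct. With $F$ the axis--angle map, $\mathbf{v}_x=\mathbf{R}_Z(\psi)\mathbf{v}$ gives $\mathbf{R}_x=\mathbf{R}_Z(\psi)\,\mathbf{R}\,\mathbf{R}_Z(\psi)^{\top}$. A leading $Z$-rotation only rotates the silhouette in the image plane, so the area depends on a rotation $\mathbf{M}$ only through the template-frame viewing direction $\mathbf{M}^{\top}(0,0,1)^{\top}$.

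The third step cannot be closed, and identifying $\mathbf{Q}$ with $\mathbf{R}_Z(\psi)^{\top}\mathbf{Q}$ is not a convention you may adopt. $\mathbf{Q}$ is a fixed template, and the trailing factor moves the viewing direction from $\mathbf{u}=\mathbf{R}^{\top}(0,0,1)^{\top}$ to $\mathbf{R}_Z(\psi)\mathbf{u}$, i.e.\ along its latitude circle. That changes the projected area of a generic shape.

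In fact the statement as written is false. Let $\mathbf{Q}$ be a triangulated box $[0,a]\times[0,b]\times[0,c]$ with $a\neq b$. Take $\alpha=\pi/2$, $\mathbf{v}=(0,1,0)^{\top}$ and $\mathbf{v}_x=\pm(1,0,0)^{\top}$; all have zero $Z$-component. Then $\mathbf{R}$ maps $(x,y,z)\mapsto(z,y,-x)$, so the silhouette is a $c\times b$ rectangle of area $bc$. But $\mathbf{R}_x$ maps $(x,y,z)\mapsto(x,\mp z,\pm y)$, giving an $a\times c$ rectangle of area $ac\neq bc$.

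So any argument of the form ``the two axes differ by a rotation about $Z$, and such rotations preserve the area'' conflates the harmless leading factor with the trailing one. That is exactly the point you isolated. The lemma holds for all parameters if and only if the projected area, as a function of $\mathbf{u}$ on the sphere, is invariant under rotations about $Z$ (e.g.\ $\mathbf{Q}$ a solid of revolution about its $Z$-axis). That is the extra hypothesis you suspected.

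What is true, and what the disc reduction actually needs, is weaker. Keep your second step and add surjectivity of the Postel disc onto viewing directions. For a unit $\mathbf{u}$, choose a unit $\mathbf{w}$ in the $XZ$-plane orthogonal to $\mathbf{u}-(0,0,1)^{\top}$; this is one linear condition in a plane. Then $(0,0,1)^{\top}$ and $\mathbf{u}$ lie on one circle about $\mathbf{w}$, so a rotation about $\pm\mathbf{w}$ by an angle in $[0,\pi]$ carries the former to the latter. Its transpose (axis $\mp\mathbf{w}$, still in the $XZ$-plane) is some $F(G(\mathbf{d}))$ with $\mathbf{d}\in\mathcal{D}_{\pi}$ and $F(G(\mathbf{d}))^{\top}(0,0,1)^{\top}=\mathbf{u}$.

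Consequently every rotation equals $\mathbf{R}_Z(\theta)F(G(\mathbf{d}))$ for some $\theta$ and $\mathbf{d}\in\mathcal{D}_{\pi}$. The $Z$-factor sits on the left, exactly as in the candidate sets $C_j$, and every attainable area is attained on $\mathcal{D}_{\pi}$. However, the matching disc point generally has a different angle and a different axis height. In the box example, area $bc$ is reached on the disc by $\alpha=\pi$ about $(1,0,1)^{\top}/\sqrt{2}$, not by $\alpha=\pi/2$ about $\pm(1,0,0)^{\top}$. State and prove this corrected sufficiency result instead of trying to justify the identification.
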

\begin{proof}
See section~2 of supplementary.
\end{proof}
\noindent Thus, instead of sampling the highly redundant $\mathcal{S}_{\pi}$, we sample only the disc of the intersection of $\mathcal{S}_{\pi}$ with its XZ-plane, we call this the Postel disc $\mathcal{D}_{\pi}$ and, for any point $\mathbf{d} \in \mathcal{D}_{\pi} \subset \mathbb{R}^2$, we denote by the function $G(\mathbf{d}) = (\alpha, \mathbf{v}^{\top})^{\top} = (\|\mathbf{d}\|, \mathbf{d}_1/\|\mathbf{d}\|, 0, \mathbf{d}_2/\|\mathbf{d}\|)^{\top}$, the invertible map from $\mathcal{D}_{\pi}$ to the $\mathcal{S}_{\pi}$.

\noindent \textbf{Silhouette area signature.}~Given that the set of all possible projected area values can be pre-computed from just $\mathcal{D}_{\pi}$, we do so by semi-densely sampling $\mathcal{D}_{\pi}$ and recording the projected area profile as a shape signature, we term this signature \y{pal}; which is a non-injective map $\mathcal{A}:\mathcal{D}_{\pi} \mapsto \mathbb{R}$ and surjective to some subset of $\mathbb{R}$. Pre-computation of \y{pal} can be done as given in algorithm-1 of supplementary. An example of \y{pal} for the triangulated 3D model of \y{pd} \citep{phlegm} is shown in \cref{fig_illust_2}.

\begin{figure*}[t]
\centering
    \begin{overpic}[width=0.9\textwidth]{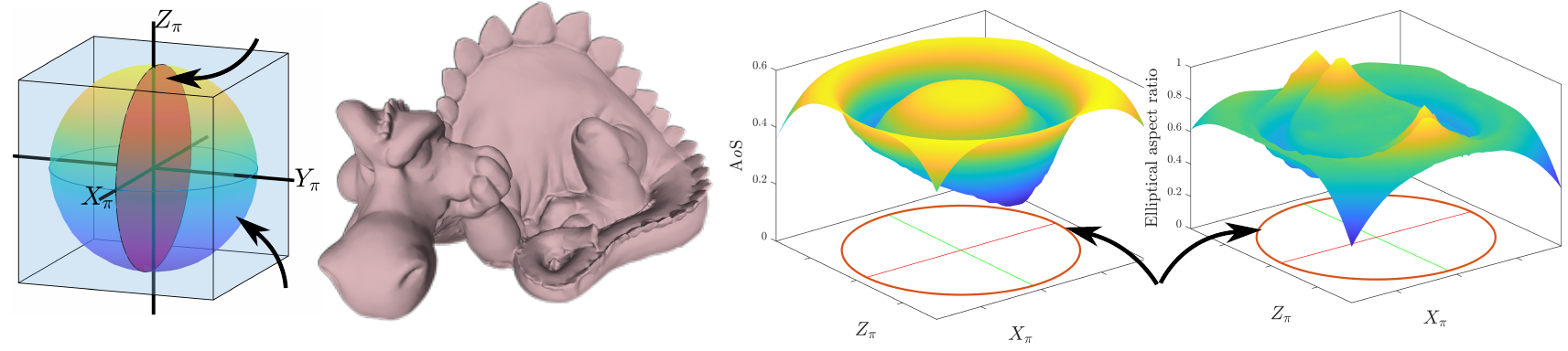}
    \put(40,0){\small (a)}
    \put(150,0){\small (b)}
    \put(275,0){\small (c)}
    \put(395,0){\small (d)}
    \put(80,10){\small $\mathcal{S}_{\pi}$}
    \put(73,89){\small $\mathcal{D}_{\pi}$}
    \put(327,11){\small $\mathcal{D}_{\pi}$}
    \put(270,100){\small $\mathcal{A}$}
    \put(390,100){\small $\mathscr{E}$}
  \end{overpic}
  \vspace{7pt}
  \caption{(a) Shows the Postel ball ($\mathcal{S}_{\pi}$) and disc ($\mathcal{D}_{\pi}$) inside a cube of length $\pi$ with axes $(X_{\pi}, Y_{\pi}, Z_{\pi})$, (b) shows the 3D template of \y{pd}, (c) shows the mapping of $\mathcal{D}_{\pi}$ to \y{aos} for \y{pd}, and (d) shows the mapping of $\mathcal{D}_{\pi}$ to elliptical aspect ratio for \y{pd}}
\label{fig_illust_2}
\end{figure*}

\subsection{Pose Estimation}\label{sec_pose_est}
We now show how to utilise pre-computed \y{aos} signatures to estimate pose up to global optimality.

\subsubsection{Globally optimal pose from area of silhouettes}
With pre-computed \y{pal}, pose inference from a silhouette involves: a) intersecting the input \y{aos} with \y{pal}, b) identifying `candidate' rotations for pose estimation, and c) efficiently searching among them for the optimal pose.

\noindent \textbf{Area-\y{pal} intersection}.~First the translation between $\mathbf{G}^{\ast}$ and $\tilde{\mathbf{G}}$ is computed as the difference of centroids $\mathbf{t} = C(\tilde{S}(\mathbf{Q},\mathbf{I}_3,\mathbf{0})) - C(\mathbf{G}^{\ast})$, where $C(\cdot)$ computes the centroid of the 2D points. Next, the iso-contour of the intersection of $A(\mathbf{G}^{\ast})$ with the map $\mathcal{A}$ is determined by standard contouring operations, namely, by first upgrading the semi-dense $\mathcal{A}$ to a continuous surface and then computing the intersection with $A(\mathbf{G}^{\ast})$ using marching squares \citep{matlab_func}, giving us $U_{\mathcal{A}} = \{\mathbf{d}_j \in \mathcal{D}_{\pi}, ~j \in [1, N_D]\}$ discrete set of intersecting points, such that:
\begin{equation}\label{eq_criter_1}
    |A\bigg(\tilde{S}\Big(\mathbf{Q},F\big(G(\mathbf{d}_j)\big), \mathbf{t}\Big)\bigg) - A(\mathbf{G}^{\ast})| \leq \epsilon_{xy},
\end{equation}
\noindent where $\epsilon_{xy}$ is a tunable threshold.

\noindent\textbf{Candidate solution search}. Since $\{\mathbf{d}j\}$ is derived from $\mathcal{D}_{\pi}$, the solution set spans $\mathfrak{R}_{XY}$ but not $\mathfrak{R}_{Z}$, allowing arbitrary $Z$-axis rotations without affecting projected \y{aos} (\cref{lemm_search_sufficiency}). Evaluating \y{aos} alone in 2D provides no further template pose validation. Aligning $\tilde{\mathbf{G}}$ and $\mathbf{G}^{\ast}$ via dominant singular values is a potential approach, but computationally expensive for large $N_D$. Instead, we leverage the 1D projections of $\tilde{\mathbf{G}}$ and $\mathbf{G}^{\ast}$ along $X$ and $Y$, where their lengths, $L_x(\tilde{\mathbf{G}})$ and $L_y(\tilde{\mathbf{G}})$, provide useful additional information.
We thus seek a rotation matrix $\mathbf{R}_z$ such that, with $\mathbf{R}_c = \mathbf{R}_zF\big(G(\mathbf{d}_j)\big)$, we have:
\begin{equation}\label{cond_accept_2D}
\begin{gathered}
|L_x\bigg(\tilde{S}\Big(\mathbf{Q},\mathbf{R}_c,\mathbf{t}\Big)\bigg) - L_x(\mathbf{G}^{\ast})| \leq \epsilon_z ~\wedge \\
|L_y\bigg(\tilde{S}\Big(\mathbf{Q},\mathbf{R}_c,\mathbf{t}\Big)\bigg) - L_y(\mathbf{G}^{\ast})| \leq \epsilon_z,
\end{gathered} 
\end{equation}
\noindent where $\epsilon_z$ is a threshold. With uniform samples along Z-axis $\theta_{z,k} \in U(0, 2\pi)$, we accept a pose as a `candidate' for a good solution if substituting $\mathbf{R}_z = M(\theta_{z,k}), ~\forall k \in [1, N_Z]$ in \cref{cond_accept_2D}, for some $N_Z$, satisfies both of its conditions; $M(\theta_{z,k})$ gives the rotation matrix from the Euler angle $\theta_{z,k}$ along Z-axis. Thus, for every point $\mathbf{d}_j$, we get a set of rotation matrices:
\begin{equation}\label{eq_c_j}
    C_j = \{M(\theta_{z,k'})F\big(G(\mathbf{d}_j)\big), ~\forall k' \in [1, N'_{Z,j}]\},
\end{equation}
\noindent where every rotation matrix in $C_j$ satisfies \cref{cond_accept_2D}. Thus $\tilde{C} = \bigcup_{j=1}^{N_D} C_j$ gives us the global set of candidate solutions. Thereafter, the filtering of $\tilde{C}$ into actual solutions of \cref{key_problem} is done by exhaustive search in this reduced feasible set $\tilde{C}$. Importantly, we show the existence of $\epsilon_o$-global optimality, meaning a candidate solution exists whose distance to global optima in $\mathbb{SO}(3)$ is bounded by $\epsilon_o$.

\begin{theorem}{($\epsilon_o$-global optimality)}\label{theorem_global_ortho}
There must exist an element of $\tilde{C}$ which lies within a ball of finite radius $\epsilon_o$ in $\mathbb{SO}(3)$ from the global optima of \cref{key_problem} with $\lim_{\epsilon_{xy}, \epsilon_z \rightarrow 0} \epsilon_o = 0$ and $(\epsilon_{xy}, \epsilon_z)$ are free parameters of sampling, controllable to approach zero.
\end{theorem}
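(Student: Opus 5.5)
The plan is to show that the global optimum $(\mathbf{R}_{\mathrm{opt}},\mathbf{t}_{\mathrm{opt}})$ of \cref{key_problem} survives both filtering stages up to a controlled perturbation. We then let the thresholds shrink and argue that the surviving candidates are forced toward $\mathbf{R}_{\mathrm{opt}}$ (or toward one of its silhouette-equivalent copies).

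\textbf{Step 1: reduce the optimum to a point of the Postel disc.} By \cref{assum_sol_exists}, $H(\tilde{\mathbf{G}}_{\mathrm{opt}},\mathbf{G}^{\ast})\sim 0$, so $A(\tilde{\mathbf{G}}_{\mathrm{opt}})=A(\mathbf{G}^{\ast})$ and $L_x,L_y$ also agree. The paper asserts that area is invariant under translation and under rotation about $Z$. Using this together with \cref{lemm_search_sufficiency}, I would factor $\mathbf{R}_{\mathrm{opt}}=M(\theta^{\ast})\,F(G(\mathbf{d}^{\ast}))$ for some $\mathbf{d}^{\ast}\in\mathcal{D}_{\pi}$ and some $\theta^{\ast}\in[0,2\pi)$. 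To justify this factorisation, the axis of $\mathbf{R}_{\mathrm{opt}}$ is rotated about $Z$ into the $XZ$-plane while its $Z$-component is preserved, and the residual is absorbed into a $Z$-rotation applied on the left. I expect this to be the \textbf{main obstacle}. \cref{lemm_search_sufficiency} only asserts equality of areas, not an explicit group factorisation. I would first try to write $\mathbf{R}_{\mathrm{opt}}$ via its third row: the viewing direction $\mathbf{R}_{\mathrm{opt}}^{\top}(0,0,1)^{\top}$ determines the silhouette up to an in-plane rotation. I would then show that $\mathbf{d}\mapsto F(G(\mathbf{d}))^{\top}(0,0,1)^{\top}$ maps $\mathcal{D}_{\pi}$ onto $S^2$, which is a direct computation with Rodrigues' formula.

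\textbf{Step 2: survival of the area test.} The sampling of $\mathcal{D}_{\pi}$ and the marching-squares contour produce points $\mathbf{d}_j$ on the discretised level set $\{\mathcal{A}=A(\mathbf{G}^{\ast})\}$. Since $\mathbf{d}^{\ast}$ lies on the exact level set, \cref{cont_theorem} applies: the map $\mathcal{A}\circ G$ is Lipschitz with some constant $K_A$, so the interpolated contour lies within $O(h)$ of the true level set for grid spacing $h$. Hence some $\mathbf{d}_j$ satisfies $\|\mathbf{d}_j-\mathbf{d}^{\ast}\|\le\delta(h,\epsilon_{xy})$ and also satisfies \cref{eq_criter_1}. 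Here $\delta\to0$ as both $h\to0$ and $\epsilon_{xy}\to0$, since $\epsilon_{xy}\ge K_A h$ is exactly what makes the test pass.

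\textbf{Step 3: survival of the $Z$-test.} $L_x$ and $L_y$ are extents of projections. They are therefore Lipschitz in the rotation, with constant of order the diameter of $\mathbf{Q}$, by the same finite-triangle argument as in \cref{cont_theorem}. Choose the sample $\theta_{z,k}$ nearest to $\theta^{\ast}$, which is within $\pi/N_Z$. The candidate $\mathbf{R}_c=M(\theta_{z,k})F(G(\mathbf{d}_j))$ then lies within $\eta=\pi/N_Z+K_F\delta$ of $\mathbf{R}_{\mathrm{opt}}$ in $\mathbb{SO}(3)$, where $K_F$ is the Lipschitz constant of $F\circ G$ away from the disc centre. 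A centre point can be handled separately as the identity. Consequently the candidate's $L_x,L_y$ differ from those of $\mathbf{G}^{\ast}$ by $O(\eta)$, so it satisfies \cref{cond_accept_2D} as soon as $\epsilon_z\gtrsim\eta$.

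\textbf{Step 4: conclude.} Steps 2 and 3 show $\mathbf{R}_c\in C_j\subset\tilde{C}$. Setting $\epsilon_o:=\eta$, which is controlled jointly with the sampling by $\epsilon_{xy}$ and $\epsilon_z$, gives $\epsilon_o\to0$ as $\epsilon_{xy},\epsilon_z\to0$. In the regime where the thresholds are coupled to the grid resolution, the sampling refinements are forced by requiring the thresholds to dominate the discretisation errors.
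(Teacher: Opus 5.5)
Your outline is the intended one: write $\mathbf{R}_{\mathrm{opt}}$ as an in-plane rotation times a Postel-disc rotation, then use Lipschitz continuity of the area and of $L_x,L_y$ to show that a nearby sampled pair passes both thresholds. However, Step~2 does not deliver the $\mathbf{d}_j$ you need. You infer from ``the interpolated contour lies within $O(h)$ of the true level set'' that some contour point lies near $\mathbf{d}^{\ast}$. That needs the reverse inclusion: every point of $\{\mathcal{A}=A(\mathbf{G}^{\ast})\}$ must be close to the marching-squares output, and Lipschitz continuity does not give this. A contour is produced near $\mathbf{d}^{\ast}$ only if $\mathcal{A}-A(\mathbf{G}^{\ast})$ changes sign on a grid cell close to $\mathbf{d}^{\ast}$, which fails when $\mathbf{d}^{\ast}$ is a local extremum at that level. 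Concretely, suppose $\mathbf{G}^{\ast}$ is a maximal-area view; one exists for every shape, since the area is a continuous function of the viewing direction on $S^2$. Then generically every sampled value is strictly below $A(\mathbf{G}^{\ast})$, marching squares returns nothing, and $\tilde{C}=\emptyset$. So you must state that $U_{\mathcal{A}}$ is the set of sampled points of $\mathcal{D}_{\pi}$ satisfying \cref{eq_criter_1}, i.e.\ a band rather than a contour, which is how the paper characterises it. Then no contouring argument is needed. If $h$ is the fill distance of the samples, some sample lies within $h$ of $\mathbf{d}^{\ast}$ and passes once $\epsilon_{xy}\geq K_A h$. Here $K_A$ comes from \cref{cont_theorem}, because $F\circ G$ is just the exponential map on the $XZ$-plane and is smooth. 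For the same reason the disc centre needs no separate treatment in Step~3.

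In Step~1, discard the first justification. Rotating the axis into the $XZ$-plane by $M(\psi)$ produces the conjugate $M(\psi)\mathbf{R}_{\mathrm{opt}}M(\psi)^{\top}$. Hence $\mathbf{R}_{\mathrm{opt}}=M(\psi)^{\top}F\big((\alpha,\mathbf{v}_x^{\top})^{\top}\big)M(\psi)$ keeps a $Z$-rotation on the \emph{right}, which acts before projection and changes the silhouette, so it cannot be absorbed. \Cref{lemm_search_sufficiency} cannot fill this in either, because conjugate rotations do not share silhouette area in general. For a box with side lengths $a\neq b$ along $X,Y$ and $c$ along $Z$, quarter turns about $X$ and about $Y$ give areas $ac$ and $bc$. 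Your third-row argument is the correct, self-contained route. Write $\mathbf{e}_i$ for the standard basis vectors. Rotations with equal third row differ by a left $Z$-rotation. For $\mathbf{w}\in S^2\setminus\{\mathbf{e}_3\}$, the axis $\mathbf{v}\propto(\mathbf{w}-\mathbf{e}_3)\times\mathbf{e}_2$ lies in the $XZ$-plane and satisfies $\mathbf{v}\cdot\mathbf{w}=\mathbf{v}\cdot\mathbf{e}_3$. So some rotation about $\pm\mathbf{v}$, with angle in $[0,\pi]$, sends $\mathbf{w}$ to $\mathbf{e}_3$, which is the surjectivity you need; its angle generally differs from that of $\mathbf{R}_{\mathrm{opt}}$. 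Finally, make Step~4 explicit, since shrinking thresholds makes existence harder, not easier. Choose the sampling so that $K_A h\leq\epsilon_{xy}$ and $K_L(\pi/N_Z+K_F h)\leq\epsilon_z$; if the $\theta_{z,k}$ are random, use half the largest gap instead of $\pi/N_Z$. Then $M(\theta_{z,k})F\big(G(\mathbf{d}_j)\big)\in\tilde{C}$ and $\epsilon_o=\eta\leq\epsilon_z/K_L\to 0$. Also drop your opening claim that all surviving candidates are forced toward $\mathbf{R}_{\mathrm{opt}}$. Equal area and extents do not pin down the rotation, and only existence is needed.
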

\begin{proof}
See section~3 of supplementary.
\end{proof}
\noindent In practice, we maintain an upper bound on $|\tilde{C}| \leq \lambda_c$, randomly eliminating candidates from $\tilde{C}$ if its count exceeds $\lambda_c$, to aid faster convergence; more details in algorithm-2 (supplementary). The guarantee in \cref{theorem_global_ortho} trivially extends to scaled-orthographic projection; the optimality conditions and convexity properties of the orthographic case remain unchanged under global scaling.

\subsubsection{Ellipse aspect-ratio driven acceleration}\label{sec_acc}
$\tilde{C}$, obtained by sampling the Postel ball for equality of projected silhouette area, already strongly branches the search space, yet further branching remains possible - which can be verified by observing that any global shape signature, if \y{lc} w.r.t rotation, can serve to subdivide the Postel ball in a manner analogous to the strategy (with area) in \cref{lemm_search_sufficiency}.  

Given $\tilde{\mathbf{G}}$, we fit an ellipse $\mathscr{E}$ algebraically \citep{bektas2015least}, noting that its aspect ratio $AR_{\mathscr{E}}$, i.e., its major-to-minor axis ratio, is heuristically \y{lc} w.r.t. $\mathbf{R}$ and $\mathbf{t}$ in most cases. Since $AR_{\mathscr{E}}$ is used solely for acceleration, the lack of a formal proof of its \y{lc}-ness w.r.t. $\mathbf{R}$ does not affect global optimality, as shown in algorithm-2 (supplementary).

\noindent \textbf{Silhouette elliptical aspect ratio signature.}~Following the same strategy as algorithm-1 (supplementary), we learn the mapping from $\mathcal{D}_{\pi}$ to $AR_{\mathscr{E}}$ via semi-dense random sampling of $\mathcal{D}_{\pi}$, creating a new shape signature \y{pearl}, which we denote by the non-injective map $\mathscr{E}:\mathcal{D}_{\pi} \mapsto \mathbb{R}$.

\noindent \textbf{Accelerated pose estimation.}~The pose estimation follows two steps: we \textit{first} estimate the contour of the intersection of $AR_{\mathscr{E}}(\mathbf{G}^{\ast})$, denoted as a function computing $AR_{\mathscr{E}}$ of $\mathbf{G}^{\ast}$ with a mild abuse of notation, with $\mathcal{D}_{\pi}$, giving us $U_{\mathscr{E}} = \{\mathbf{h}_{j'} \in \mathcal{D}_{\pi}, ~j' \in [1, N_E]\}$ discrete set of intersecting points such that:
\begin{equation}\label{eq_criter_2_ear}
    |A\bigg(\tilde{S}\Big(\mathbf{Q},F\big(G(\mathbf{h}_{j'})\big), \mathbf{t}\Big)\bigg) - A(\mathbf{G}^{\ast})| \leq \epsilon_{e},
\end{equation}
\noindent where $\epsilon_{e}$ is a tunable threshold. For the \textit{second} step, we find the nearest neighbours between $\{\mathbf{d}_{j}\}$ and $\{\mathbf{h}_{j'}\}$, giving us $U_{\mathcal{A} \cap \mathscr{E}} = \{\tilde{\mathbf{d}}_h \in \mathcal{D}_{\pi}, ~h \in [1, N_{\mathcal{A}\cap\mathscr{E}}]\}$, such that $U_{\mathcal{A} \cap \mathscr{E}} \subseteq U_{\mathcal{A}}$ and :
\begin{equation}\label{eq_intersect_criter}
    \exists j''\in [1, N_E] ~\text{s.t.:}~ \|\tilde{\mathbf{d}}_h - \mathbf{h}_{j''}\| \leq \epsilon_{\cap}, \forall h \in [1, N_{\mathcal{A}\cap\mathscr{E}}],
\end{equation}
\noindent where $\epsilon_{\cap}$ is a parameter denoting an infinitesimal circle around each $\tilde{\mathbf{d}}_h$ which is considered part of the intersection $U_{\mathscr{E}} \cap U_{\mathcal{A}}$. We show an example of the pose estimation outcome with \y{pd} in \cref{fig_illust_3}.

\begin{figure}[t]
\centering
    \begin{overpic}[width=\columnwidth]{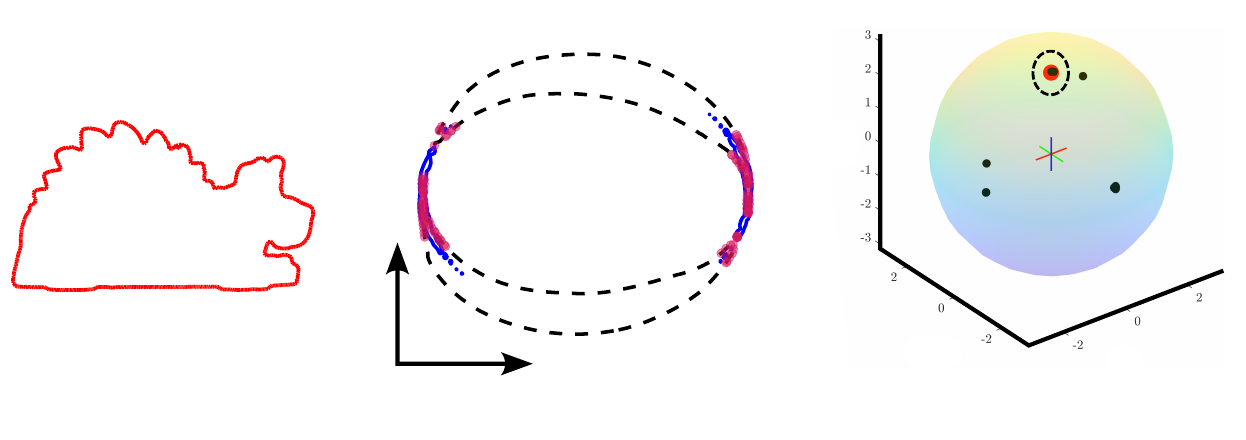}
    \put(25,3){\footnotesize (a)}
    \put(112,3){\footnotesize (b)}
    \put(195,3){\footnotesize (c)}
    \put(25,64){\small $\mathbf{G}^{\ast}$}
    \put(106,45){\small $\mathcal{D}_{\pi}$}
    \put(218,70){\small $\mathcal{S}_{\pi}$}
    \put(65,35){\tiny $Z_{\pi}$}
    \put(95,7){\tiny $X_{\pi}$}
    \put(215,15){\tiny $X_{\pi}$}
    \put(170,19){\tiny $Y_{\pi}$}
    \put(155,56){\tiny $Z_{\pi}$}
  \end{overpic}
  \vspace{1pt}
  \caption{Example of pose estimation: (a) shows input silhouette from \y{pd}, (b) shows $U_{\mathcal{A}}$ in black-dashes, $U_{\mathscr{E}}$ in blue, and $U_{\mathcal{A} \cap \mathscr{E}}$ in red on $\mathcal{D}_{\pi}$, and (c) shows $\tilde{C}$ in black and the \y{gt} pose in red (encircled) inside $\mathcal{S}_{\pi}$; the \y{pal} and \y{pearl} for \y{pd} are given in \cref{fig_illust_2}}
\label{fig_illust_3}
\end{figure}

\noindent \textbf{Resolution pyramid}.~Our final estimation strategy follows a pyramidal approach, initialising search bounds $(\epsilon_{xy}, \epsilon_{z}, \epsilon_{e}, \epsilon_{\cap})$ to satisfy $H(\tilde{\mathbf{G}},\mathbf{G}^{\ast}) \leq \epsilon_{H}$, a desired precision. If violated, typically due to noisy silhouettes, only $(\epsilon_{\cap}, \epsilon_{z})$ are incremented and retried. This iterative algorithm is detailed in algorithm-2 (supplementary).

We denote the pose from algorithm-2 (supplementary) as \y{gp} and algorithm-2 (supplementary) followed by non-linear refinement (\cref{sec_non_lin}) as \y{gpp}. 

\subsection{Non-linear Refinement}\label{sec_non_lin}
The solution pose $(\mathbf{R}_{\mathrm{opt}}, \mathbf{t}_{\mathrm{opt}})$ from algorithm-2 (supplementary) may deviate from the global optimum of \cref{key_problem} due to sampling resolutions $(\epsilon_{xy}, \epsilon_{z}, \epsilon_{\cap}, \epsilon_{H})$ and noise. To mitigate, we apply local, non-linear refinement, optimizing \cref{key_problem} while initializing $(\mathbf{R}, \mathbf{t})$ with $(\mathbf{R}_{\mathrm{opt}}, \mathbf{t}_{\mathrm{opt}})$. The cost is iteratively minimized via steps along the $\mathbb{SE}(3)$ tangent plane, followed by manifold retraction using standard optimization tools \citep{manopt}. The final refined pose is $(\mathbf{R}_{\mathrm{ref}}, \mathbf{t}_{\mathrm{ref}})$.


\subsection{Perspective Approximation}
The optimality guarantee of \cref{theorem_global_ortho} does not extend directly to perspective projection, since perspectivity couples silhouette-based signatures with object translation. Incorporating translation into the $\mathbb{R}^3$ search space would cause exponential growth, rendering \y{bnb}-style global optimization impractical. Hence, following prior work~\citep{cui2024silhouette}, we \textit{assume availability of a coarse depth prior}, obtainable from RGB-D or monocular estimation (e.g.:\citep{Bochkovskii2024:arxiv}). The perspective silhouettes are $\mathbf{G}_{\Pi} = S(\Pi(\mathbf{Q}))$, $\Pi(\cdot)$ being perspective projection, leading to the perspective equivalent of \cref{key_problem}, with $\mathbf{t} \in \mathbb{R}^3$. \Cref{cont_theorem} can be extended to perspective projection, since:
\begin{lemma}
$A(\tilde{\mathbf{G}}_{\Pi})$ is \y{lc} w.r.t. any sequence of rotations of $\mathbf{Q}$ in $\mathbb{R}^3$ as long as the sequence of rotations is \y{lc}, $\mathbf{Q} \notin \{Z = 0\}$, and $\mathbf{Q}$ is representable with a finite collection of triangles
\end{lemma}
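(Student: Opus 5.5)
We reduce to the orthographic case (\cref{cont_theorem}), with one extra ingredient: the perspective map $\Pi(x,y,z)=(x/z,\,y/z)$ is smooth only away from $\{Z=0\}$. The plan has three steps.

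\textbf{Step 1 (setup).} Write $\mathbf{Q}$ as the union of finitely many triangles $T_1,\dots,T_K$. The rotated template $\mathbf{R}(s)\mathbf{Q}$ is translated by the fixed $\mathbf{t}\in\mathbb{R}^3$ along the Lipschitz trajectory $s\mapsto\mathbf{R}(s)$. The perspective silhouette region is then $\Omega(s)=\bigcup_i \Pi\big(\mathbf{R}(s)T_i+\mathbf{t}\big)$. I read the hypothesis $\mathbf{Q}\notin\{Z=0\}$ as: the moved template stays at depth bounded away from zero, $|Z|\ge z_{\min}>0$, uniformly along the trajectory. This follows from compactness of the trajectory's image and of $\mathbf{Q}$. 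On the compact set $\{|Z|\ge z_{\min},\ \|\mathbf{X}\|\le r\}$, the map $\Pi$ is Lipschitz with constant $L_\Pi\lesssim 1/z_{\min}+r/z_{\min}^2$.

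\textbf{Step 2 (vertex motion).} Each vertex $\mathbf{p}$ moves as $\mathbf{p}(s)=\Pi\big(\mathbf{R}(s)\mathbf{P}+\mathbf{t}\big)$. This is a composition of Lipschitz maps, so it is Lipschitz in $s$ with constant $L_\Pi L_R\,\|\mathbf{P}\|$.

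Planar points do not map linearly under $\Pi$, but they do map projectively. Hence each projected triangle $\Pi(T_i)$ is exactly the planar triangle spanned by its three projected vertices, because $\Pi$ sends line segments not crossing $Z=0$ to segments. Consequently $\Omega(s)$ is a finite union of planar triangles whose vertices move Lipschitz-continuously. This is exactly the situation handled in the proof of \cref{cont_theorem} for orthographic projection; there, $\Pi_{\mathrm{O}}$ was linear, but only the vertex motion was used.

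\textbf{Step 3 (area of the union).} We need $|A(\Omega(s))-A(\Omega(s'))|\le C|s-s'|$. I would reuse the argument from \cref{cont_theorem}. The symmetric difference $\Omega(s)\,\triangle\,\Omega(s')$ is contained in $\bigcup_i \Pi(T_i(s))\,\triangle\,\Pi(T_i(s'))$. For each triangle, the area of this symmetric difference is bounded by its perimeter times the maximal vertex displacement, plus a lower-order term. Perimeters are uniformly bounded by $L_\Pi\,\mathrm{diam}(\mathbf{Q})$. Summing over the $K$ triangles gives the Lipschitz constant $C\le K\cdot 3L_\Pi\,\mathrm{diam}\cdot L_\Pi L_R\max\|\mathbf{P}\|$, up to constants.

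The main obstacle is making the symmetric-difference bound rigorous when triangles degenerate, i.e. when they are viewed edge-on. The remedy is to bound the region swept by each edge: the triangle is contained in the Minkowski sum of its moved version with a disc of radius equal to the displacement, which avoids any dependence on triangle area or nondegeneracy.

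A second subtlety is that Lipschitz continuity requires $z_{\min}>0$ along the whole trajectory. This is where the hypothesis enters essentially: as $z\to0$, the constant $L_\Pi$ blows up like $z_{\min}^{-2}$.
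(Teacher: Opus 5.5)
Your proposal is correct and takes essentially the same route as the paper. The perspective case reduces to the orthographic continuity argument once you observe that $\Pi$ is Lipschitz on a compact region whose depth is bounded away from zero, so projected vertices still move Lipschitz-continuously and projected triangles remain triangles. Two small points to tighten: first, absorb the $\pi\delta^2$ Steiner term using $\delta\le 2r/z_{\min}$ (every projected point lies in a disc of radius $r/z_{\min}$); second, state explicitly that $\mathbf{Q}\notin\{Z=0\}$ must hold along the whole closed trajectory, which compactness then upgrades to a uniform $z_{\min}>0$.
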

\begin{proof} See section 4 of supplementary.
\end{proof}
\noindent \y{pearl} is equivalently heuristically \y{lc} and therefore used for acceleration following \cref{sec_acc}. Given a prior depth, \y{pal} and \y{pearl} are pre-computed offline perspectively at that depth while $\mathbf{G}_{\Pi}^{\ast}$ is transformed to normalised image coordinates, assuming known intrinsics. Non-linear refinement follows \cref{sec_non_lin}, replacing $\mathbf{G}$ with $\mathbf{G}_{\Pi}$. The details of the pipeline are omitted for brevity, obtained as the perspective variants of algorithm 2 (supplementary) given by \y{gpersp} and \y{gpersp} followed by perspective variant of \cref{sec_non_lin} as \y{gppersp}. 

\glsunset{tipfsa}
\glsunset{tipfsb}

\section{Experimental Results}\label{sec_res}
We empirically verify our approach below.

\noindent \textbf{Experimental setup.}~Using three 3D models \y{stbu}, \y{pd}, and \y{pelb} (resp.), visualized in section 7 of supplementary, we use point clouds of 29072, 29120, and 28976 points randomly sampled on their surface to obtain $\mathcal{A}$ and $\mathcal{E}$ via algorithm-1 (supplementary). For pose estimation, independently sampled point clouds are used to ensure a separation between offline pre-computation and pose estimation data. Each experiment applies roto-translation via $\mathbb{SE}(3)$ random-sampling, followed by orthographic projection and silhouette extraction. We also use the 20 objects in the real data from \y{bcot} benchmark dataset~\citep{li2022bcot}, specifically in two modes: \rom{1}) for statistical validation, random poses near the provided \y{gt} are simulated to render silhouettes perspectively (all methods evaluated under identical conditions), \rom{2}) segmented objects from the `\textit{complex\_movable\_handheld}' sequence of \y{bcot} are used to estimate pose. Randomly sampled surfaces of the 20 objects in \y{bcot} are used to learn $\mathcal{A}$ and $\mathcal{E}$ using perspective variant of algorithm-1 (supplementary) at a constant empirical depth prior of 80 \textit{cm} -- all perspective experiments are given without re-learning this depth prior.
\glsunset{bsl_NonLinLie}
\glsunset{bsl_NonLinICP}
\glsunset{bsl_GMS}
\glsunset{tipfsorth}
\begin{table*}[t] \begin{adjustbox}{width=0.9\textwidth,center} \begin{tabular}{crccccccccccccccc}  \toprule \hline\multirow{2}{*}{} & \multirow{2}{*}{} & \multicolumn{3}{c}{\cellcolor[RGB]{200,200,200} \textbf{\y{bsl_NonLinLie}}} & \multicolumn{3}{c}{\cellcolor[RGB]{200,200,200} \textbf{\y{bsl_NonLinICP}}} & \multicolumn{3}{c}{\cellcolor[RGB]{200,200,200} \textbf{\y{bsl_GMS}}} & \multicolumn{3}{c}{\cellcolor[RGB]{200,200,200} \textbf{\y{tipfsorth}}} & \multicolumn{3}{c}{\cellcolor[RGB]{200,200,200} \textbf{\y{gpp}}}\\& & \cellcolor[RGB]{200,200,200} \textbf{RMSE} $\downarrow$  & \cellcolor[RGB]{200,200,200} \textbf{OE} $\downarrow$  & \cellcolor[RGB]{200,200,200} \textbf{TE} $\downarrow$  & \cellcolor[RGB]{200,200,200} \textbf{RMSE} $\downarrow$  & \cellcolor[RGB]{200,200,200} \textbf{OE} $\downarrow$  & \cellcolor[RGB]{200,200,200} \textbf{TE} $\downarrow$  & \cellcolor[RGB]{200,200,200} \textbf{RMSE} $\downarrow$  & \cellcolor[RGB]{200,200,200} \textbf{OE} $\downarrow$  & \cellcolor[RGB]{200,200,200} \textbf{TE} $\downarrow$  & \cellcolor[RGB]{200,200,200} \textbf{RMSE} $\downarrow$  & \cellcolor[RGB]{200,200,200} \textbf{OE} $\downarrow$  & \cellcolor[RGB]{200,200,200} \textbf{TE} $\downarrow$  & \cellcolor[RGB]{200,200,200} \textbf{RMSE} $\downarrow$  & \cellcolor[RGB]{200,200,200} \textbf{OE} $\downarrow$  & \cellcolor[RGB]{200,200,200} \textbf{TE} $\downarrow$ \\ \cline{3-17} \multirow{ 3}{*}{\textbf{\y{stbu}}} & \textbf{Mean}  & 42.35 &54.66 &8.85 &43.13 &55.62 &7.97 &28.89 &33.32 &\textbf{3.87} &\textbf{9.75} &\textbf{3.12} &9.74 &\textbf{\underline{0.46}} &\textbf{\underline{0.32}} &\textbf{\underline{0.14}}\\ & \textbf{SD}  & 8.18 &25.48 &8.85 &\textbf{4.18} &25.2 &6.51 &14.53 &30.64 &\textbf{1.99} &9.07 &\textbf{13.63} &9.06 &\textbf{\underline{0.35}} &\textbf{\underline{0.36}} &\textbf{\underline{0.15}}\\ & \textbf{Max.}  & 64.23 & \textbf{101.03} &46.3 &50.37 & \textcolor{red}{110.55} &27.2 &\textbf{48.46} & \textcolor{red}{104.25} &\textbf{9.91} &67.77 &\textcolor{red}{111.45} &67.76 &\textbf{\underline{1.6}} &\textbf{\underline{2.2}} &\textbf{\underline{0.79}}\\ \cline{2-17} \multirow{ 3}{*}{\textbf{\y{pd}}} & \textbf{Mean}  & 38.22 &44.93 &5.9 &36.77 &40.17 &5.18 &\textbf{32.58} &38.73 &\textbf{4.18} &101.55 &\textbf{4.29} &101.41 &\textbf{\underline{0.91}} &\textbf{\underline{0.61}} &\textbf{\underline{0.32}}\\ & \textbf{SD}  & \textbf{7.85} &23.41 &3.67 &8.96 &26.54 &4.42 &11.88 &31.63 &\textbf{2.29} &248.78 &\textbf{16.9} &248.68 &\textbf{\underline{0.92}} &\textbf{\underline{0.88}} &\textbf{\underline{0.32}}\\ & \textbf{Max.}  & 50.01 &\textbf{94.51} &17.3 &48.4 & \textcolor{red}{118.91} &18.82 &\textbf{48.38} & \textcolor{red}{114.81} &\textbf{10.56} & \textcolor{red}{1879.94} &\textcolor{red}{118.87} & \textcolor{red}{1879.57} &\textbf{\underline{7}} &\textbf{\underline{7.43}} &\textbf{\underline{2.16}}\\ \cline{2-17} \multirow{ 3}{*}{\textbf{\y{pelb}}} & \textbf{Mean}  & 35.86 &41.87 &8.66 &37.72 &39.25 &12.41 &\textbf{32.86} &37.17 &\textbf{5.41} &78.99 &\textbf{3.47} &78.9 &\textbf{\underline{0.76}} &\textbf{\underline{0.5}} &\textbf{\underline{0.26}}\\ & \textbf{SD}  & 11.03 &28.98 &6.54 &\textbf{9.3} &31.44 &7.17 &13.44 &29.21 &\textbf{3.3} &180.81 &\textbf{15.15} &180.7 &\textbf{\underline{0.95}} &\textbf{\underline{0.93}} &\textbf{\underline{0.27}}\\ & \textbf{Max.}  & 50.22 & \textcolor{red}{100.79} &31.7 &\textbf{49.78} & \textcolor{red}{106.18} &25.86 &51.21 &\textbf{97.92} &\textbf{14.92} & \textcolor{red}{1362.52} &105.64 & \textcolor{red}{1362.14} &\textbf{\underline{7.41}} &\textbf{\underline{8.58}} &\textbf{\underline{1.48}}\\\hline\bottomrule \end{tabular} \end{adjustbox} \caption{Accuracy of the methods \y{bsl_NonLinLie}, \y{bsl_NonLinICP}, \y{bsl_GMS}, \y{tipfsorth}, and \y{gpp} on \y{stbu}, \y{pd}, and \y{pelb} (\footnotesize{best values in each category are marked with \underline{\textbf{bold-underlines}}, second-best values are in \textbf{bold}, very large error values have been highlighted in \textcolor{red}{red})}}   \label{tab_label_ToChange} \end{table*} 
\glsreset{bsl_NonLinLie}
\glsreset{bsl_NonLinICP}
\glsreset{bsl_GMS}
\glsreset{tipfsorth}

\noindent \textbf{Metric.}~We use three error metrics: \rom{1}) \y{oe}, the mean absolute Euler angles of the optimal orientation between \y{gt} and estimated posed shape, computed with Horn's method~\citep{horn1988closed}, \rom{2}) \y{te}, computed as $\|\mathbf{t}_{\mathrm{gt}} - \mathbf{t}_{\mathrm{est}}\|_2$, and \rom{3}) \y{rmse}, measuring the difference between $\mathbf{Q}_{\mathrm{gt}}$ and $\mathbf{Q}_{\mathrm{est}}$. Here, the suffix `$\mathrm{gt}$' and `$\mathrm{est}$' denotes the \y{gt} and estimated values (resp.). \y{te} and \y{rmse} are expressed as percentages of \y{ld} for the 3D templates of \y{pd}, \y{stbu}, and \y{pelb} and for \y{bcot}, they are expressed in \textit{mm}.

\noindent \textbf{Compared methods}.~We compare against three variants of \y{pfs} solutions that are intuitive: \rom{1}) \y{bsl_NonLinLie}, a non-convex solver using \y{lm} \citep{more2006levenberg} parametrised on Lie algebra \citep{gilmore2006lie} of the rotation group, \rom{2}) \y{bsl_NonLinICP}, an iterative projection and refinement method aligning $\mathbf{G}^{\ast}$ and $\tilde{\mathbf{G}}$ via Euclidean distance minimisation, and \rom{3}) \y{bsl_GMS} following \citep{ugray2007scatter} while solving \cref{key_problem} directly; \textit{details of these approaches are given in section-7-of-supplementary}. Importantly, \y{bsl_NonLinLie}, \y{bsl_NonLinICP}, and \y{bsl_GMS} are `pose-estimation methods' (and not reconstruction or registration methods). We do extensive comparison against~\citep{cui2024silhouette}, denoted by \y{tipfs}, as our closest recent baseline method. We also adapt \y{tipfs} into an orthographic equivalent \y{tipfsorth} for fair comparison with orthographic silhouettes.

\subsection{Orthographic Silhouettes}\label{subsec_posest}
We present our results on orthographic silhouettes below.

\noindent \textbf{Experiment}.~We validate our approach by estimating \y{pfs} with \y{gpp}, repeated 200 times on randomly posed silhouettes of \y{pd}, \y{stbu}, and \y{pelb}, with additive silhouette noise of \y{sd} $\sim1\%$ of \y{ld}. The compared methods \y{bsl_NonLinLie}, \y{bsl_NonLinICP}, \y{bsl_GMS}, and \y{tipfsorth} undergo identical experiments, with results in \cref{tab_label_ToChange}. 

\noindent \textbf{Comparison}.~\y{tipfsorth} is the second-best method in all experiments. Problems due to its stochastic nature is clear from the maximum error values of \y{tipfsorth}, which are higher than \y{bsl_NonLinLie}, \y{bsl_NonLinICP}, and \y{bsl_GMS} (\y{oe} of $\sim$110$^{\circ}$ is practically unusable). Our method is 89.74\%, 85.78\%, and 85.59\% better than \y{tipfsorth} in mean \y{oe} -- a significant improvement -- while our worst case \y{oe} is $\sim$8.6$^{\circ}$ for \y{pelb}, which is due to numerical artefacts and not typically `catastrophic' for most use-cases. More importantly, mean \y{oe} of our method for all shapes are $\ll1^{\circ}$, confirming the power of global optimality in \y{pfs}. Error histograms of \y{gp} and \y{gpp}, given in section 7 (supplementary), confirm robustness across all experiments. Notably, \y{gp} exhibits higher \y{te} accuracy since its translation is in a closed-form, while \y{gpp} optimises on the $\mathbb{SE}(3)$ manifold, prioritising \y{oe} at minor expense to \y{te}.

\noindent \textbf{Symmetry as proxy for algorithmic complexity}.~A formal derivation of \y{gp}'s algorithmic complexity is intractable, however, the number of candidate solutions $|\tilde{C}|$ intuitively scales with the object’s rotational symmetry, e.g.: a perfect sphere yields theoretically infinite solutions. To validate this intuition, we synthesize real spherical-harmonic shapes, denoted with a mild notational-abuse as $Y_l^m$ following~\citep{wiki:Spherical_harmonics}, with $l\in [2,6] \subset \mathbb{Z}^+$ and $m =l-1$, ordered reverse-chronologically by decreasing symmetry (labels \textit{B}–\textit{F} in \cref{sph_harm}\rom{1}; \textit{A} denotes the sphere). The number of candidates obtained by \y{gp} over 25 trials (\cref{sph_harm}\rom{2}) shows approximately logarithmic decay with diminishing symmetry, while the sphere exhibits exponentially higher $|\tilde{C}|$. This confirms: (\rom{1}) $|\tilde{C}|$ contracts as symmetry reduces, implying higher efficiency for real-world shapes; and (\rom{2}) for highly symmetric objects, silhouette-only pose estimation is inherently ambiguous -- a geometric truism.

\noindent \textbf{Effect of noisy silhouettes on accuracy}.~Evaluating silhouette noise impact on pose estimation is done with \y{pelb}, a genus-1 shape (\cref{tab_label_ToChange}). Three noise regimes are considered: \textit{low} (\y{sd} $\approx 1\%$ of \y{ld}), \textit{medium} (\y{sd} $\approx 2\%$ of \y{ld}), and \textit{high} (\y{sd} $\approx 4\%$ of \y{ld}); exemplar noisy silhouettes are shown in section 7 of supplementary. A pose estimate is deemed `successful' if \y{oe} $\leq 6^{\circ}$ and \y{te} $\leq 2\%$ of \y{ld}. Unlike \cref{tab_label_ToChange}, we analyse not only the best solution but seven best candidate poses from $\tilde{C}$ of \y{gp}, corresponding to the lowest $H(\mathbf{G},\mathbf{G}^{\ast})$ values (\cref{eq_org_haussdorff_prob}). Results in \cref{fig:images_noisy} gives success percentages. Increasing noise shifts optimal candidates away from \y{gt}. For \textit{low} and \textit{medium} noise, top-ranked candidate is successful in 100\% of trials (confirming \cref{tab_label_ToChange}). \textit{High} noise shifts candidates deeper in hierarchy, sometimes beyond the first seven levels. The behaviour is consistent with graceful noise degradation: with sufficient candidate sampling (algorithm-2, supplementary), valid poses are eventually recovered.

\noindent \textbf{Impact of parameters on runtime and accuracy}.~We analyse \y{gp} varying thresholds for filtering candidate poses, detecting intersections, and adjusting template point-count for signature computation, focusing on \y{pd} as the toughest case (\cref{tab_label_ToChange}). \Cref{fig:ablate}a show \y{rmse} and time changes with $\epsilon_z \in [0, 0.15]$ in 7 steps; accuracy remains stable, but outliers vanish at $\epsilon_z = 0.15$, albeit with increased runtime. \Cref{fig:ablate}b reveal a V-shaped \y{rmse} response to $\epsilon_{\cap} \in [0, 0.15]$, minimizing at $\epsilon_{\cap} \sim 0.08$ due to maintaining $\lambda_c \sim 10^2$. \Cref{fig:ablate}c indicate accuracy rises with increasing $P \in [100, 29121]$, though for greater runtime -- an expected trade-off. Our implementation is in MATLAB executed on Intel$^{\text{\textregistered}}$ Core™ i9-10920X (24-core) CPU, 128 GB RAM. Importantly, such runtimes are typical of \y{bnb} methods and remains parallelisable as a future work.

\begin{figure}[b]
  \centering



  \begin{subfigure}[b]{0.6\linewidth}
    \centering
    \includegraphics[width=\linewidth,keepaspectratio]{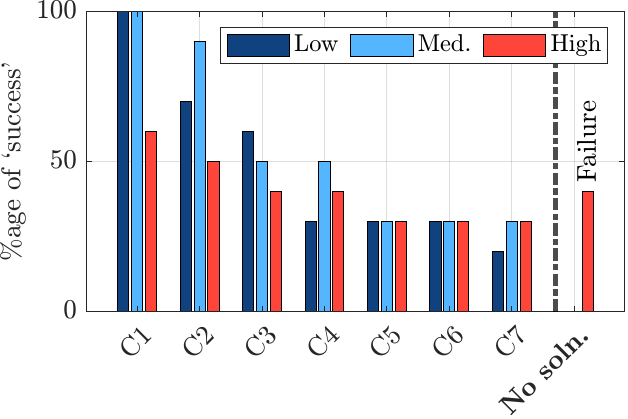}
  \end{subfigure}

  \caption{Success rate of pose estimation across the seven best solution candidates, denoted C$x$ ($x \in [1,7]$), including some failure cases only for `high' noise}
  \label{fig:images_noisy}
\end{figure}

\begin{figure}[b]
  \centering
  \begin{overpic}[width=0.9\linewidth]{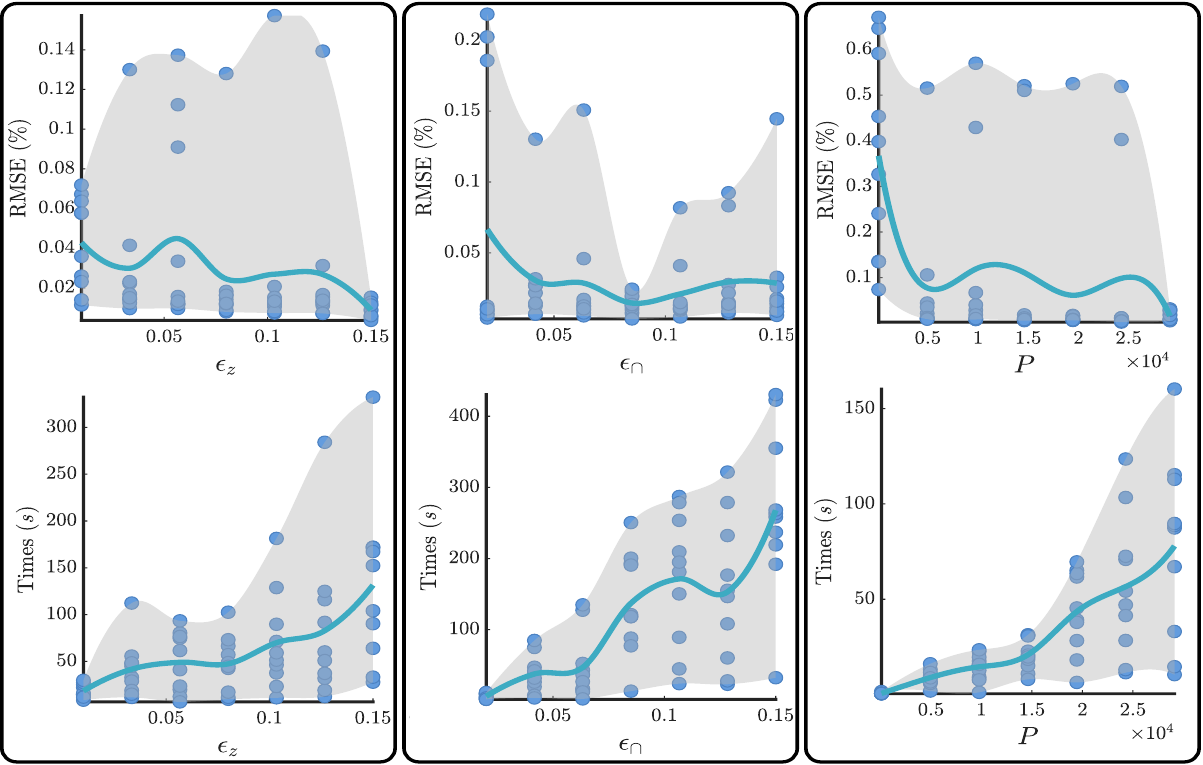}
    \put(3,70){\small (a)}
    \put(74,70){\small (b)}
    \put(146,70){\small (c)}
  \end{overpic}
  
  \vspace{2mm}
  \caption{\y{rmse} (top-row) and runtime (bottom-row) of \y{gp} under varying (a) $\epsilon_z$, (b) $\epsilon_{\cap}$, and (c) $P$. Blue curves denote interpolated mean-curves \footnotesize{(please zoom-in)}.}
  \label{fig:ablate}
\end{figure}


\glsunset{tipfsa}
\glsunset{tipfsb}
\glsunset{gppersp}
\glsunset{gppersp8}
\begin{table}[t] \begin{adjustbox}{width=0.85\columnwidth,center} \begin{tabular}{rcccccccc}  \toprule \hline\multirow{2}{*}{} & \multicolumn{2}{c}{\cellcolor[RGB]{200,200,200} \textbf{\y{tipfsa}}} & \multicolumn{2}{c}{\cellcolor[RGB]{200,200,200} \textbf{\y{tipfsb}}} & \multicolumn{2}{c}{\cellcolor[RGB]{200,200,200} \textbf{\y{gppersp8}}} & \multicolumn{2}{c}{\cellcolor[RGB]{200,200,200} \textbf{\y{gppersp}}}\\ & \cellcolor[RGB]{200,200,200} \textbf{Mean} & \cellcolor[RGB]{200,200,200} \textbf{SD} & \cellcolor[RGB]{200,200,200} \textbf{Mean} & \cellcolor[RGB]{200,200,200} \textbf{SD} & \cellcolor[RGB]{200,200,200} \textbf{Mean} & \cellcolor[RGB]{200,200,200} \textbf{SD} & \cellcolor[RGB]{200,200,200} \textbf{Mean} & \cellcolor[RGB]{200,200,200} \textbf{SD}\\ \cline{2-9} \textbf{\textcolor{colAsym}{Ape}}  & 32.74 &28.07 &\textbf{10.01} &\textbf{\underline{1.55}} &54.61 &28.99 &\textbf{\underline{1.6}} &\textbf{2.25}\\\textbf{\textcolor{colAsym}{Cat}}  & 31.71 &21.98 &\textbf{19.29} &\textbf{9.87} &78.8 &10.88 &\textbf{\underline{0.72}} &\textbf{\underline{0.46}}\\\textbf{\textcolor{colAsym}{Jack}}  & 32.04 &35.14 &10.71 &19.05 &\textbf{10.04} &\textbf{4.01} &\textbf{\underline{1.75}} &\textbf{\underline{2.22}}\\\textbf{\textcolor{colAsym}{Squirrel}}  & \textbf{36.2} &42.85 &38.08 &\textbf{42.64} &121.91 &79.19 &\textbf{\underline{2.1}} &\textbf{\underline{2.14}}\\\textbf{\textcolor{colAsym}{Stitch}}  & 29.07 &33.34 &18.87 &27.74 &\textbf{8.26} &\textbf{1.14} &\textbf{\underline{0.71}} &\textbf{\underline{0.41}}\\\textbf{\textcolor{colAsym}{Vampire queen}}  & 49.53 &32.61 &\textbf{26.98} &\textbf{6.77} &57.6 &8.07 &\textbf{\underline{2.24}} &\textbf{\underline{1.76}}\\\textbf{\textcolor{colSym}{3D Touch}}  & 48.61 &31.62 &33.4 &19.99 &\textbf{21.21} &\textbf{\underline{7.44}} &\textbf{\underline{3.99}} &\textbf{12.21}\\\textbf{\textcolor{colSym}{Auto GPS}}  & 93.44 &47.81 &88.39 &41.51 &\textbf{52.24} &\textbf{27.05} &\textbf{\underline{1.57}} &\textbf{\underline{2.05}}\\\textbf{\textcolor{colSym}{Bracket}}  & 47.35 &50.1 &\textbf{15.06} &\textbf{\underline{1.79}} &17.41 &\textbf{11} &\textbf{\underline{10.26}} &27.87\\\textbf{\textcolor{colSym}{Deadpool}}  & 40.74 &36.63 &\textbf{15.08} &\textbf{\underline{11.74}} &67.12 &38.58 &\textbf{\underline{4}} &\textbf{12.27}\\\textbf{\textcolor{colSym}{Driller}}  & 76.93 &61.11 &62.74 &51.7 &\textbf{8.22} &\textbf{\underline{1.21}} &\textbf{\underline{1.31}} &\textbf{1.36}\\\textbf{\textcolor{colSym}{Flashlight}}  & 57.25 &42.57 &\textbf{42.51} &33.59 &88.76 &\textbf{9.31} &\textbf{\underline{1.37}} &\textbf{\underline{1.35}}\\\textbf{\textcolor{colSym}{Lamp clamp}}  & \textbf{73.2} &\textbf{49.22} &77.03 &\textbf{\underline{44.52}} &113.25 &61.66 &\textbf{\underline{36.55}} &50.63\\\textbf{\textcolor{colSym}{Lego}}  & 58.35 &28.78 &\textbf{29.5} &\textbf{25.1} &47.3 &32.09 &\textbf{\underline{19.28}} &\textbf{\underline{23.47}}\\\textbf{\textcolor{colSym}{RJ45}}  & 47.62 &48.01 &\textbf{32.22} &38.76 &94.57 &\textbf{24.81} &\textbf{\underline{5.39}} &\textbf{\underline{19.25}}\\\textbf{\textcolor{colSym}{RTI Arm}}  & 110.85 &59.98 &79.31 &66.97 &\textbf{41.53} &\textbf{\underline{27.78}} &\textbf{\underline{32.37}} &\textbf{43.8}\\\textbf{\textcolor{colSym}{Standtube}}  & 36.68 &42.23 &\textbf{29.05} &\textbf{34.91} &109.56 &65.78 &\textbf{\underline{1.08}} &\textbf{\underline{0.71}}\\\textbf{\textcolor{colSym}{Teapot}}  & 46.92 &40.43 &\textbf{25.41} &30.98 &81.4 &\textbf{\underline{10.12}} &\textbf{\underline{10.51}} &\textbf{24.57}\\\textbf{\textcolor{colSym}{Tube}}  & 47.84 &40.6 &27.96 &29.2 &\textbf{21.71} &\textbf{3.6} &\textbf{\underline{1.36}} &\textbf{\underline{0.82}}\\\textbf{\textcolor{colSym}{Wall Shelf}}  & 48.32 &39.88 &37.3 &34.58 &\textbf{10.24} &\textbf{2.13} &\textbf{\underline{0.76}} &\textbf{\underline{0.72}}\\\hline\bottomrule \end{tabular} \end{adjustbox} \caption{\y{rmse}($\downarrow$) in \textit{mm}: mean and \y{sd} for \y{tipfsa}, \y{tipfsb}, \y{gppersp}, and \y{gppersp8} for the 20 \y{bcot} objects. Asymmetric and symmetric objects have been colour-coded \textcolor{colAsym}{\textbf{blue}} and \textcolor{colSym}{\textbf{red}} respectively (\footnotesize{best values in each category are marked with \underline{\textbf{bold-underlines}}, second-best values are in \textbf{bold}})}   \label{tab_bcot_sti_rmse} \end{table}

\begin{table}[t] \begin{adjustbox}{width=0.85\columnwidth,center} \begin{tabular}{rcccccccc}  \toprule \hline\multirow{2}{*}{} & \multicolumn{2}{c}{\cellcolor[RGB]{200,200,200} \textbf{\y{tipfsa}}} & \multicolumn{2}{c}{\cellcolor[RGB]{200,200,200} \textbf{\y{tipfsb}}} & \multicolumn{2}{c}{\cellcolor[RGB]{200,200,200} \textbf{\y{gppersp8}}} & \multicolumn{2}{c}{\cellcolor[RGB]{200,200,200} \textbf{\y{gppersp}}}\\ & \cellcolor[RGB]{200,200,200} \textbf{Mean} & \cellcolor[RGB]{200,200,200} \textbf{SD} & \cellcolor[RGB]{200,200,200} \textbf{Mean} & \cellcolor[RGB]{200,200,200} \textbf{SD} & \cellcolor[RGB]{200,200,200} \textbf{Mean} & \cellcolor[RGB]{200,200,200} \textbf{SD} & \cellcolor[RGB]{200,200,200} \textbf{Mean} & \cellcolor[RGB]{200,200,200} \textbf{SD}\\ \cline{2-9} \textbf{\textcolor{colAsym}{Ape}}  & 8.94 &16.93 &\textbf{0.86} &\textbf{1.04} &4.7 &2.86 &\textbf{\underline{0.29}} &\textbf{\underline{0.33}}\\\textbf{\textcolor{colAsym}{Cat}}  & 3.12 &\textbf{4.74} &\textbf{3.07} &9.44 &37.54 &25.67 &\textbf{\underline{0.18}} &\textbf{\underline{0.15}}\\\textbf{\textcolor{colAsym}{Jack}}  & 7.21 &12.83 &6.57 &22.55 &\textbf{\underline{0.37}} &\textbf{\underline{0.15}} &\textbf{0.38} &\textbf{0.44}\\\textbf{\textcolor{colAsym}{Squirrel}}  & 9.67 &25.22 &10.32 &15.46 &\textbf{0.75} &\textbf{\underline{0.43}} &\textbf{\underline{0.5}} &\textbf{0.55}\\\textbf{\textcolor{colAsym}{Stitch}}  & 10.09 &23.98 &10 &20.24 &\textbf{1.13} &\textbf{0.39} &\textbf{\underline{0.21}} &\textbf{\underline{0.2}}\\\textbf{\textcolor{colAsym}{Vampire queen}}  & 14.62 &25.45 &4.82 &10.99 &\textbf{1.96} &\textbf{0.57} &\textbf{\underline{0.55}} &\textbf{\underline{0.56}}\\\textbf{\textcolor{colSym}{3D Touch}}  & 14.75 &27.09 &8.14 &14.38 &\textbf{\underline{1.6}} &\textbf{\underline{0.85}} &\textbf{3.31} &\textbf{13.36}\\\textbf{\textcolor{colSym}{Auto GPS}}  & 24.93 &35.16 &25.73 &34.36 &\textbf{3.28} &\textbf{1.74} &\textbf{\underline{0.4}} &\textbf{\underline{0.58}}\\\textbf{\textcolor{colSym}{Bracket}}  & 15 &26.46 &\textbf{\underline{0.63}} &\textbf{1.03} &\textbf{0.76} &\textbf{\underline{0.47}} &6.06 &17.84\\\textbf{\textcolor{colSym}{Deadpool}}  & 9.28 &18.2 &\textbf{\underline{1.08}} &\textbf{3.6} &\textbf{1.4} &\textbf{\underline{0.65}} &1.44 &5.13\\\textbf{\textcolor{colSym}{Driller}}  & 33.64 &38.54 &15.5 &25.23 &\textbf{0.3} &\textbf{\underline{0.04}} &\textbf{\underline{0.13}} &\textbf{0.11}\\\textbf{\textcolor{colSym}{Flashlight}}  & 32.87 &40.72 &22.88 &34.23 &\textbf{3.89} &\textbf{1.53} &\textbf{\underline{0.31}} &\textbf{\underline{0.32}}\\\textbf{\textcolor{colSym}{Lamp clamp}}  & 27.11 &33.53 &\textbf{23.62} &\textbf{25.93} &23.94 &\textbf{\underline{16.22}} &\textbf{\underline{19.18}} &29.86\\\textbf{\textcolor{colSym}{Lego}}  & 47.21 &35.36 &\textbf{17.06} &\textbf{24.89} &60.26 &41.75 &\textbf{\underline{3.25}} &\textbf{\underline{13.67}}\\\textbf{\textcolor{colSym}{RJ45}}  & \textbf{15.6} &25.66 &17.44 &35.28 &45.76 &\textbf{15.27} &\textbf{\underline{2.42}} &\textbf{\underline{9.58}}\\\textbf{\textcolor{colSym}{RTI Arm}}  & 48.17 &35.63 &29.89 &33.35 &\textbf{\underline{0.22}} &\textbf{\underline{0.04}} &\textbf{21.65} &\textbf{29.94}\\\textbf{\textcolor{colSym}{Standtube}}  & 20.09 &34.56 &\textbf{16.42} &\textbf{28.48} &45.78 &31.31 &\textbf{\underline{0.29}} &\textbf{\underline{0.28}}\\\textbf{\textcolor{colSym}{Teapot}}  & 22.65 &27.2 &\textbf{11.91} &\textbf{25.58} &59.19 &36.22 &\textbf{\underline{6.02}} &\textbf{\underline{17.34}}\\\textbf{\textcolor{colSym}{Tube}}  & 13.43 &23.03 &14.23 &24.13 &\textbf{2.4} &\textbf{0.62} &\textbf{\underline{0.29}} &\textbf{\underline{0.2}}\\\textbf{\textcolor{colSym}{Wall Shelf}}  & 25.43 &37.86 &17.47 &25.08 &\textbf{0.31} &\textbf{\underline{0.03}} &\textbf{\underline{0.16}} &\textbf{0.13}\\\hline\bottomrule \end{tabular} \end{adjustbox} \caption{\y{oe}($\downarrow$) in degrees, following same scheme as \cref{tab_bcot_sti_rmse}}   \label{tab_bcot_sti_oe} \end{table}

\begin{table}[t] \begin{adjustbox}{width=0.85\columnwidth,center} \begin{tabular}{rcccccccc}  \toprule \hline\multirow{2}{*}{} & \multicolumn{2}{c}{\cellcolor[RGB]{200,200,200} \textbf{\y{tipfsa}}} & \multicolumn{2}{c}{\cellcolor[RGB]{200,200,200} \textbf{\y{tipfsb}}} & \multicolumn{2}{c}{\cellcolor[RGB]{200,200,200} \textbf{\y{gppersp8}}} & \multicolumn{2}{c}{\cellcolor[RGB]{200,200,200} \textbf{\y{gppersp}}}\\ & \cellcolor[RGB]{200,200,200} \textbf{Mean} & \cellcolor[RGB]{200,200,200} \textbf{SD} & \cellcolor[RGB]{200,200,200} \textbf{Mean} & \cellcolor[RGB]{200,200,200} \textbf{SD} & \cellcolor[RGB]{200,200,200} \textbf{Mean} & \cellcolor[RGB]{200,200,200} \textbf{SD} & \cellcolor[RGB]{200,200,200} \textbf{Mean} & \cellcolor[RGB]{200,200,200} \textbf{SD}\\ \cline{2-9} \textbf{\textcolor{colAsym}{Ape}}  & 25.39 &21.67 &\textbf{9.68} &1.41 &11.94 &\textbf{0.65} &\textbf{\underline{0.25}} &\textbf{\underline{0.22}}\\\textbf{\textcolor{colAsym}{Cat}}  & 29.28 &18.71 &\textbf{17.72} &\textbf{4.57} &26.67 &6.04 &\textbf{\underline{0.16}} &\textbf{\underline{0.13}}\\\textbf{\textcolor{colAsym}{Jack}}  & 26.58 &27.6 &5.81 &1.77 &\textbf{4.17} &\textbf{\underline{0.15}} &\textbf{\underline{0.34}} &\textbf{0.52}\\\textbf{\textcolor{colAsym}{Squirrel}}  & 25.62 &21.39 &\textbf{14.72} &\textbf{8.22} &18.98 &8.56 &\textbf{\underline{0.23}} &\textbf{\underline{0.24}}\\\textbf{\textcolor{colAsym}{Stitch}}  & 23.34 &25.36 &7.26 &9.49 &\textbf{2.87} &\textbf{0.06} &\textbf{\underline{0.03}} &\textbf{\underline{0.02}}\\\textbf{\textcolor{colAsym}{Vampire queen}}  & 38.64 &20.87 &\textbf{24.41} &\textbf{0.88} &48.31 &4.28 &\textbf{\underline{0.59}} &\textbf{\underline{0.59}}\\\textbf{\textcolor{colSym}{3D Touch}}  & 36.45 &19.41 &27.51 &11.56 &\textbf{8.93} &\textbf{\underline{0.12}} &\textbf{\underline{0.39}} &\textbf{0.89}\\\textbf{\textcolor{colSym}{Auto GPS}}  & 64.29 &20.3 &64.39 &22.01 &\textbf{12.31} &\textbf{\underline{0.74}} &\textbf{\underline{0.58}} &\textbf{0.78}\\\textbf{\textcolor{colSym}{Bracket}}  & 36.28 &40.46 &\textbf{14.87} &\textbf{\underline{1.59}} &16.28 &10.21 &\textbf{\underline{2.24}} &\textbf{6.65}\\\textbf{\textcolor{colSym}{Deadpool}}  & 31.68 &26.02 &12.68 &\textbf{\underline{1.21}} &\textbf{7.52} &\textbf{1.55} &\textbf{\underline{0.81}} &2.57\\\textbf{\textcolor{colSym}{Driller}}  & 32.47 &24.58 &21.85 &12.96 &\textbf{8.08} &\textbf{1.28} &\textbf{\underline{0.21}} &\textbf{\underline{0.21}}\\\textbf{\textcolor{colSym}{Flashlight}}  & 34.63 &22.59 &22.09 &8.5 &\textbf{11.46} &\textbf{3.74} &\textbf{\underline{0.14}} &\textbf{\underline{0.12}}\\\textbf{\textcolor{colSym}{Lamp clamp}}  & 29.71 &23.51 &\textbf{11.87} &\textbf{2.32} &30.34 &5.34 &\textbf{\underline{0.45}} &\textbf{\underline{0.28}}\\\textbf{\textcolor{colSym}{Lego}}  & 33.42 &28.99 &8.28 &5.28 &\textbf{0.23} &\textbf{0.14} &\textbf{\underline{0.02}} &\textbf{\underline{0.02}}\\\textbf{\textcolor{colSym}{RJ45}}  & 28.98 &28.04 &\textbf{13.13} &6.04 &16.09 &\textbf{4.23} &\textbf{\underline{0.72}} &\textbf{\underline{2.13}}\\\textbf{\textcolor{colSym}{RTI Arm}}  & 37.98 &43.27 &4.8 &2.63 &\textbf{2.24} &\textbf{\underline{0.04}} &\textbf{\underline{0.22}} &\textbf{0.69}\\\textbf{\textcolor{colSym}{Standtube}}  & 27.05 &29.22 &\textbf{13.51} &13.28 &15.31 &\textbf{0.23} &\textbf{\underline{0.21}} &\textbf{\underline{0.14}}\\\textbf{\textcolor{colSym}{Teapot}}  & 24.38 &23.73 &\textbf{6.02} &\textbf{\underline{1.2}} &18.83 &8.1 &\textbf{\underline{0.91}} &\textbf{1.74}\\\textbf{\textcolor{colSym}{Tube}}  & 32.38 &28.02 &17.04 &11.22 &\textbf{10.06} &\textbf{3.99} &\textbf{\underline{0.19}} &\textbf{\underline{0.15}}\\\textbf{\textcolor{colSym}{Wall Shelf}}  & 34.64 &28.77 &20.44 &18.25 &\textbf{9.38} &\textbf{3.46} &\textbf{\underline{0.16}} &\textbf{\underline{0.16}}\\\hline\bottomrule \end{tabular} \end{adjustbox} \caption{\y{te}($\downarrow$) in \textit{mm}, following same scheme as \cref{tab_bcot_sti_rmse}}   \label{tab_bcot_sti_te} \end{table}

\glsunset{gppersp8}
\subsection{Perspective Silhouettes}
We present our results on perspective silhouettes below.

\noindent \textbf{Experiment}.~For \y{gppersp}, two depth-prior settings are used: \rom{1}) the nominal pre-computation depth for $\mathcal{A}$ and $\mathcal{E}$, and \rom{2}) a perturbed setting (\y{gppersp8}) with random $\pm$8 \textit{cm} deviation from learned signatures, exceeding typical depth-sensor noise. For \y{tipfs}, authors’ provided depth bounds ($[5,100]$ \textit{cm}) yielded unstable results; hence, bounds were improved to provide meaningful results: within $\pm$ 10 \textit{cm} and $\pm$ 5 \textit{cm} of \y{gt} depth, denoted as \y{tipfsa} and \y{tipfsb} (resp.).
The results are summarised in \cref{tab_bcot_sti_rmse,tab_bcot_sti_oe,tab_bcot_sti_te}. The \y{bcot} objects were grouped into 6 symmetric and 14 asymmetric objects, colour-coded in \cref{tab_bcot_sti_rmse,tab_bcot_sti_oe,tab_bcot_sti_te}; sample qualitative results on asymmetric shapes from \textit{complex\_movable\_handheld} sequence are shown in \cref{bcot_qual}, with extensive qualitative results in section 7 (supplementary).

\noindent \textbf{Comparison}.~The accuracy of \y{gppersp} is very high for asymmetric shapes, dropping for symmetric shapes owing to to multiple ambiguous solutions -- consistent with \y{tipfsa}, \y{tipfsb} and \y{gppersp8}. \y{gppersp} is the best method across all shapes while the second position is split between \y{tipfsb} and \y{gppersp8} for \y{rmse} and \y{te}. For \y{oe}, however, \y{gppersp8} is ahead of \y{tipfsb}, confirming its advantage for orientation estimation. \y{tipfsa} lags behind significantly on all metrics, hinting at higher sensitivity of \y{tipfs} to inaccurate depth bounds.

\noindent \textbf{Effect of depth prior on accuracy}.~We vary the deviation of pre-computed and pose-estimation depth from 0 to 70 \textit{mm} in 10 steps and record $H(\mathbf{G}_{\Pi},\mathbf{G}_{\Pi}^{\ast})$ along with \y{rmse}, \y{oe}, and \y{te} for an asymmetric and symmetric shape from \y{bcot}: \textit{Ape} and \textit{AutoGPS} (resp.) -- plots in section 7 (supplementary). We observe: \rom{1}) \textit{AutoGPS} is less accurate than \textit{Ape} as expected, and \rom{2}) for \textit{Ape}, increasing deviation of pre-computed and pose-estimation depth results in outlying errors which could be large, however, the median accuracy remains low. 

\noindent \textbf{Partial comparisons - non-uniform baselines}.~For completeness, brief illustrative comparisons with \y{dac}~\citep{wang2023deep} and \y{p1e}~\citep{gaudilliere2023perspective} are included in section 7 of supplementary, acknowledging their distinctly different and incomparable problem assumptions.

\noindent \textbf{Additional analysis}.~We offer examples of ambiguities induced by symmetric shapes, variance of Hausdorff distance and \y{aos} with point cloud sampling density and silhouette noise, and experiment on thin shell objects from the Bramante39M~\citep{sengupta2025convex, bartoli2025camera} dataset in section 7 (supplementary).

\begin{figure}[h!]
  \centering{
  \begin{subfigure}[t]{0.48\columnwidth}
    \centering
    \includegraphics[height=3.4cm]{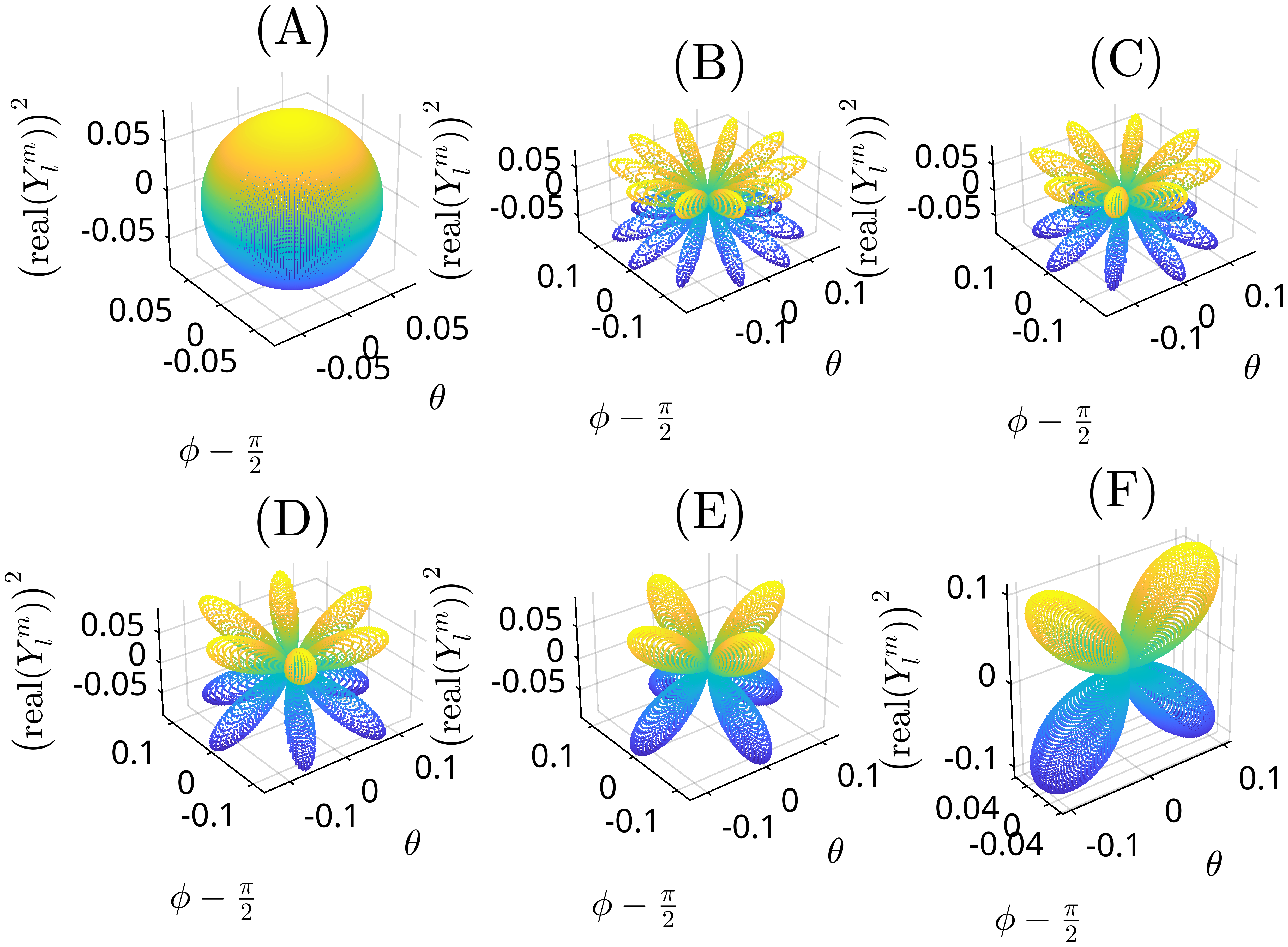}
    \caption*{(\rom{1})}
  \end{subfigure}%
  \hspace{2.5mm}
  \begin{subfigure}[t]{0.48\columnwidth}
    \centering
    \includegraphics[height=3.3cm]{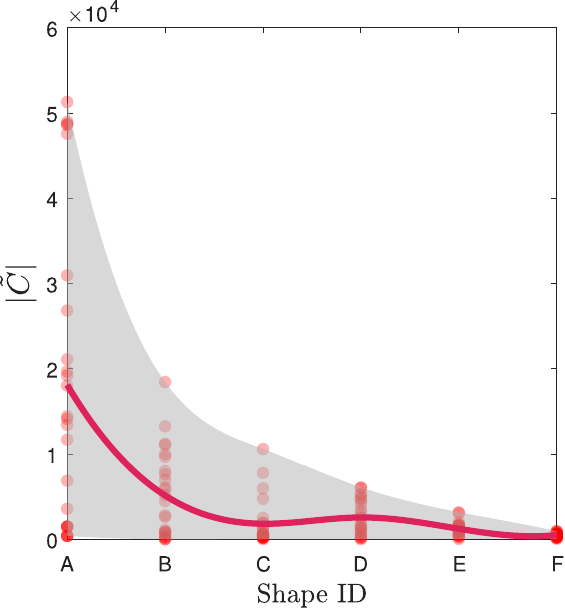}
    \caption*{(\rom{2})}
  \end{subfigure}}
  \caption{\rom{1}) The six shapes \textit{A} through \textit{F} using spherical harmonics, and \rom{2}) $|\tilde{C}|$ plotted against shapes \textit{A} through \textit{F}, the red curve passes through mean values, the shaded area gives the range}
  \label{sph_harm}
\end{figure}

\begin{figure}[b]
\centering
\begin{subfigure}[b]{\linewidth}
    \centering
    \includegraphics[width=0.95\linewidth,keepaspectratio]{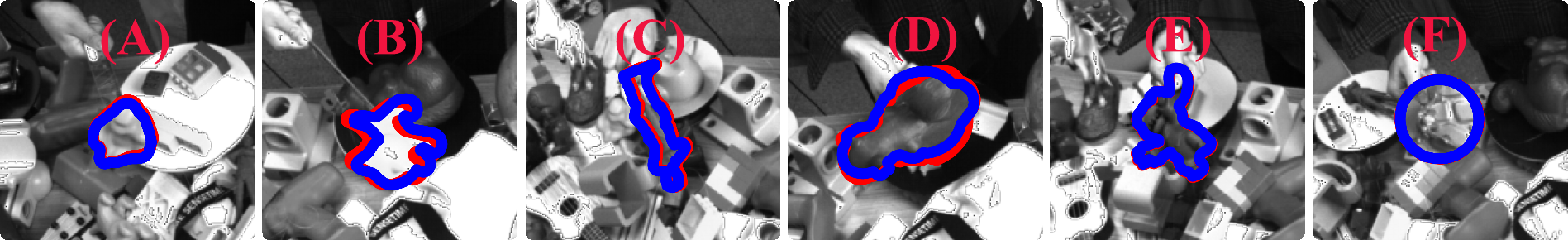}
  \end{subfigure}

  \caption{Qualitative results on the asymmetric shapes from \textit{complex\_movable\_handheld} sequence of \y{bcot} dataset: \textit{Ape}, \textit{Cat}, \textit{Jack}, \textit{Squirrel}, \textit{Stitch}, and \textit{Vampire queen} shown in (A) through (F); red curves are \y{gt} silhouettes, blue curves are from \y{gppersp} (please zoom-in for details)}
  \label{bcot_qual}
\end{figure}

\section{Discussion}

Despite our method's accuracy, strong occlusion or heavy noise can still induce failures. Such settings are fundamentally unsolvable, no existing silhouette-only approach is reliable under such data corruption. Our contribution targets the regime where silhouettes alone remain informative, and within this scope, our method delivers consistently strong performance. Section~8 (supplementary) collects \textit{additional clarifications and responses to typical questions}.

\section{Conclusion}\label{sec_concl}



We introduce the first globally optimal solution to the \y{pfs} problem, delivering high accuracy and broad applicability across domains such as robotics, medical imaging, and \y{ar}. While extreme occlusion and high noise remain unsolvable theoretical bottlenecks, within the practically relevant regime of unoccluded silhouettes, our approach decisively advances the field, establishing benchmarks that surpass all prior methods.
{
    \clearpage
    \section*{Acknowledgements}
    \addcontentsline{toc}{section}{Acknowledgements}
    This research was conducted in the research campus MODAL, funded by the Federal Ministry of Research, Technology and Space (BMFTR), Germany, grant no. 3FO18501. This research is also connected with the Competence Center for Excellent Technologies (COMET , grant no. 911654), administered by the Austrian Research Promotion Agency (FFG).
    \small
    \bibliography{main}

@article{corsini2012efficient,
  title={Efficient and flexible sampling with blue noise properties of triangular meshes},
  author={Corsini, Massimiliano and Cignoni, Paolo and Scopigno, Roberto},
  journal={IEEE transactions on visualization and computer graphics},
  volume={18},
  number={6},
  pages={914--924},
  year={2012},
  publisher={IEEE}
}

@article{edelsbrunner1983shape,
  title={On the shape of a set of points in the plane},
  author={Edelsbrunner, Herbert and Kirkpatrick, David and Seidel, Raimund},
  journal={IEEE Transactions on information theory},
  volume={29},
  number={4},
  pages={551--559},
  year={1983},
  publisher={IEEE}
}

@article{hartley2009global,
  title={Global optimization through rotation space search},
  author={Hartley, Richard I and Kahl, Fredrik},
  journal={International Journal of Computer Vision},
  volume={82},
  number={1},
  pages={64--79},
  year={2009},
  publisher={Springer}
}

@misc{Wikipedia_2024, 
title={Azimuthal equidistant projection},
author={Wikipedia},
url={https://en.wikipedia.org/wiki/Azimuthal_equidistant_projection}, journal={Wikipedia}, publisher={Wikimedia Foundation}, 
note = "[Online; accessed 10-April-2026]", year={2024}, month={Dec}}

@book{matlab_func, 
place={Natick, MA}, 
title={MATLAB Function Reference}, publisher={The Mathworks, Inc.}, author={The Mathworks, Inc.}, year={2024}}

@article{bektas2015least,
  title={Least squares fitting of ellipsoid using orthogonal distances},
  author={Bektas, Sebahattin},
  journal={Boletim de ci{\^e}ncias geod{\'e}sicas},
  volume={21},
  number={2},
  pages={329--339},
  year={2015},
  publisher={SciELO Brasil}
}

@Article{manopt,
    author  = {Boumal, N. and Mishra, B. and Absil, P.-A. and Sepulchre, R.},
    journal = {Journal of Machine Learning Research},
    title   = {{M}anopt, a {M}atlab Toolbox for Optimization on Manifolds},
    year    = {2014},
    number  = {42},
    pages   = {1455--1459},
    volume  = {15},
    url     = {https://www.manopt.org}
}

@inproceedings{zhang2009using,
  title={Using silhouette for pose estimation of object with surface of revolution},
  author={Zhang, Ming and Zheng, Yinqiang and Liu, Yuncai},
  booktitle={2009 16th IEEE International Conference on Image Processing (ICIP)},
  pages={333--336},
  year={2009},
  organization={IEEE}
}

@inproceedings{toppe2011silhouette,
  title={Silhouette-based variational methods for single view reconstruction},
  author={T{\"o}ppe, Eno and Oswald, Martin R and Cremers, Daniel and Rother, Carsten},
  booktitle={Video Processing and Computational Video: International Seminar, Dagstuhl Castle, Germany, October 10-15, 2010. Revised Papers},
  pages={104--123},
  year={2011},
  organization={Springer}
}

@inproceedings{gummeson2024relative,
  title={Relative pose from cylinder silhouettes},
  author={Gummeson, Anna and Oskarsson, Magnus},
  booktitle={Proceedings of the Asian Conference on Computer Vision},
  pages={2545--2561},
  year={2024}
}

@inproceedings{prasad2005fast,
  title={Fast and Controllable 3D Modelling From Silhouettes.},
  author={Prasad, Mukta and Fitzgibbon, Andrew W and Zisserman, Andrew},
  booktitle={Eurographics (Short Presentations)},
  pages={9--12},
  year={2005}
}

@inproceedings{kong2017using,
  title={Using locally corresponding cad models for dense 3d reconstructions from a single image},
  author={Kong, Chen and Lin, Chen-Hsuan and Lucey, Simon},
  booktitle={Proceedings of the IEEE conference on computer vision and pattern recognition},
  pages={4857--4865},
  year={2017}
}

@inproceedings{menudetmodel,
  title={Model-based Shape from Silhouette: A Solution Involving a Small Number of Views},
  author={Menudet, JF and Becker, JM and Fournel, T and Mennessier, C},
  booktitle={Proceedings of the Second International Conference on Computer Vision Theory and Applications},
  pages={379--386},
  year={2007}
}

@article{gaudilliere2023perspective,
  title={Perspective-1-Ellipsoid: Formulation, analysis and solutions of the camera pose estimation problem from one ellipse-ellipsoid correspondence},
  author={Gaudilli{\`e}re, Vincent and Simon, Gilles and Berger, Marie-Odile},
  journal={International Journal of Computer Vision},
  volume={131},
  number={9},
  pages={2446--2470},
  year={2023},
  publisher={Springer}
}

@article{howe2007silhouette,
  title={Silhouette lookup for monocular 3d pose tracking},
  author={Howe, Nicholas R},
  journal={Image and Vision Computing},
  volume={25},
  number={3},
  pages={331--341},
  year={2007},
  publisher={Elsevier}
}

@inproceedings{howe2004silhouette,
  title={Silhouette lookup for automatic pose tracking},
  author={Howe, Nicholas R},
  booktitle={2004 Conference on Computer Vision and Pattern Recognition Workshop},
  pages={15--22},
  year={2004},
  organization={IEEE}
}

@inproceedings{wang2023deep,
  title={Deep active contours for real-time 6-DoF object tracking},
  author={Wang, Long and Yan, Shen and Zhen, Jianan and Liu, Yu and Zhang, Maojun and Zhang, Guofeng and Zhou, Xiaowei},
  booktitle={Proceedings of the IEEE/CVF International Conference on Computer Vision},
  pages={14034--14044},
  year={2023}
}

@article{chen2025robust,
  title={Robust 6DoF Pose Tracking Considering Contour and Interior Correspondence Uncertainty for AR Assembly Guidance},
  author={Chen, Jixiang and Chen, Jing and Liu, Kai and Chang, Haochen and Fu, Shanfeng and Yang, Jian},
  journal={arXiv preprint arXiv:2502.11971},
  year={2025}
}

@inproceedings{cui2024silhouette,
  title={Silhouette-Based 6D Object Pose Estimation},
  author={Cui, Xiao and Li, Nan and Zhang, Chi and Zhang, Qian and Feng, Wei and Wan, Liang},
  booktitle={International Conference on Computational Visual Media},
  pages={157--179},
  year={2024},
  organization={Springer}
}

@inproceedings{albanis2020dronepose,
  title={Dronepose: photorealistic uav-assistant dataset synthesis for 3d pose estimation via a smooth silhouette loss},
  author={Albanis, Georgios and Zioulis, Nikolaos and Dimou, Anastasios and Zarpalas, Dimitrios and Daras, Petros},
  booktitle={Computer Vision--ECCV 2020 Workshops: Glasgow, UK, August 23--28, 2020, Proceedings, Part II 16},
  pages={663--681},
  year={2020},
  organization={Springer}
}

@inproceedings{wang2020directshape,
  title={Directshape: Direct photometric alignment of shape priors for visual vehicle pose and shape estimation},
  author={Wang, Rui and Yang, Nan and Stueckler, Joerg and Cremers, Daniel},
  booktitle={2020 IEEE International Conference on Robotics and Automation (ICRA)},
  pages={11067--11073},
  year={2020},
  organization={IEEE}
}

@inproceedings{hebert2012combined,
  title={Combined shape, appearance and silhouette for simultaneous manipulator and object tracking},
  author={Hebert, Paul and Hudson, Nicolas and Ma, Jeremy and Howard, Thomas and Fuchs, Thomas and Bajracharya, Max and Burdick, Joel},
  booktitle={2012 IEEE International Conference on Robotics and Automation},
  pages={2405--2412},
  year={2012},
  organization={IEEE}
}

@article{collins2020augmented,
  title={Augmented reality guided laparoscopic surgery of the uterus},
  author={Collins, Toby and Pizarro, Daniel and Gasparini, Simone and Bourdel, Nicolas and Chauvet, Pauline and Canis, Michel and Calvet, Lilian and Bartoli, Adrien},
  journal={IEEE Transactions on Medical Imaging},
  volume={40},
  number={1},
  pages={371--380},
  year={2020},
  publisher={IEEE}
}

@article{guo2021pose,
  title={Pose initialization of uncooperative spacecraft by template matching with sparse point cloud},
  author={Guo, Wulong and Hu, Weiduo and Liu, Chang and Lu, Tingting},
  journal={Journal of Guidance, Control, and Dynamics},
  volume={44},
  number={9},
  pages={1707--1720},
  year={2021},
  publisher={American Institute of Aeronautics and Astronautics}
}

@article{zhang2023mc,
  title={MC-LRF based pose measurement system for shipborne aircraft automatic landing},
  author={Zhang, Zhuo and Qiufu, WANG and Daoming, BI and Xiaoliang, SUN and Qifeng, YU},
  journal={Chinese Journal of Aeronautics},
  volume={36},
  number={8},
  pages={298--312},
  year={2023},
  publisher={Elsevier}
}

@article{perez2023optimal,
  title={Optimal coherent point selection for 3D quality inspection from silhouette-based reconstructions},
  author={P{\'e}rez Soler, Javier and Guardiola, Jose-Luis and Perez Jimenez, Alberto and Garrigues Carb{\'o}, Pau and Garc{\'\i}a Sastre, Nicol{\'a}s and Perez-Cortes, Juan-Carlos},
  journal={Mathematics},
  volume={11},
  number={21},
  pages={4419},
  year={2023},
  publisher={MDPI}
}

@misc{phlegm, 
title={3D model of a dragon released during EuroGraphics 2007}, 
url={www.dcgi.fel.cvut.cz/eg07/index.php?page=dragon}, 
journal={Sketchfab, Inc.}, 
author={Jiří Filip and Holub, Radek and Havran, Vlastimil and Křivánek, Jaroslav and Sýkora, Daniel},
year={2007}}

@inproceedings{more2006levenberg,
  title={The Levenberg-Marquardt algorithm: implementation and theory},
  author={Mor{\'e}, Jorge J},
  booktitle={Numerical analysis: proceedings of the biennial Conference held at Dundee, June 28--July 1, 1977},
  pages={105--116},
  year={2006},
  organization={Springer}
}

@book{gilmore2006lie,
  title={Lie groups, Lie algebras, and some of their applications},
  author={Gilmore, Robert},
  year={2006},
  publisher={Courier Corporation}
}

@article{ugray2007scatter,
  title={Scatter search and local NLP solvers: A multistart framework for global optimization},
  author={Ugray, Zsolt and Lasdon, Leon and Plummer, John and Glover, Fred and Kelly, James and Mart{\'\i}, Rafael},
  journal={INFORMS Journal on computing},
  volume={19},
  number={3},
  pages={328--340},
  year={2007},
  publisher={Informs}
}

@inproceedings{li2022bcot,
  title={Bcot: A markerless high-precision 3d object tracking benchmark},
  author={Li, Jiachen and Wang, Bin and Zhu, Shiqiang and Cao, Xin and Zhong, Fan and Chen, Wenxuan and Li, Te and Gu, Jason and Qin, Xueying},
  booktitle={Proceedings of the IEEE/CVF Conference on Computer Vision and Pattern Recognition},
  pages={6697--6706},
  year={2022}
}

@inproceedings{Bochkovskii2024:arxiv,
  author     = {Aleksei Bochkovskii and Ama\"{e}l Delaunoy and Hugo Germain and Marcel Santos and
               Yichao Zhou and Stephan R. Richter and Vladlen Koltun},
  title      = {Depth Pro: Sharp Monocular Metric Depth in Less Than a Second},
  booktitle  = {International Conference on Learning Representations},
  year       = {2025},
  url        = {https://arxiv.org/abs/2410.02073},
}

@misc{wiki:Spherical_harmonics,
   author = "Wikipedia",
   title = "{Spherical harmonics}",
   year = "2025",
   howpublished = {\url{http://en.wikipedia.org/w/index.php?title=Spherical\%20harmonics&oldid=1321561306}},
   note = "[Online; accessed 13-November-2025]"
 }

@article{vijayakumar1998invariant,
  title={Invariant-based recognition of complex curved 3D objects from image contours},
  author={Vijayakumar, B and Kriegman, David and Ponce, Jean},
  journal={Computer Vision and Image Understanding},
  volume={72},
  number={3},
  pages={287--303},
  year={1998},
  publisher={Elsevier}
}

@inproceedings{lazebnik2002pencils,
  title={On pencils of tangent planes and the recognition of smooth 3d shapes from silhouettes},
  author={Lazebnik, Svetlana and Sethi, Amit and Schmid, Cordelia and Kriegman, David and Ponce, Jean and Hebert, Martial},
  booktitle={European Conference on Computer Vision},
  pages={651--665},
  year={2002},
  organization={Springer}
}

@article{horn1988closed,
  title={Closed-form solution of absolute orientation using orthonormal matrices},
  author={Horn, Berthold KP and Hilden, Hugh M and Negahdaripour, Shahriar},
  journal={Journal of the Optical Society of America A},
  volume={5},
  number={7},
  pages={1127--1135},
  year={1988},
  publisher={Optical Society of America}
}

@article{bartoli2025camera,
  title={Camera pose in SfT and NRSfM under isometric and weaker deformation models},
  author={Bartoli, Adrien and Sengupta, Agniva},
  journal={Computer Vision and Image Understanding},
  pages={104488},
  year={2025},
  publisher={Elsevier}
}

@article{sengupta2025convex,
  title={Convex Solutions to SfT and NRSfM under Algebraic Deformation Models},
  author={Sengupta, Agniva and Bartoli, Adrien},
  journal={IEEE Transactions on Pattern Analysis and Machine Intelligence},
  year={2025},
  publisher={IEEE}
}

@article{sengupta2025shape,
  title={Shape-from-template with generalised camera},
  author={Sengupta, Agniva and Zachow, Stefan},
  journal={Image and Vision Computing},
  pages={105579},
  year={2025},
  publisher={Elsevier}
}
}

\end{document}